\def\eqref#1{equation~\ref{#1}}
\def\1{\bm{1}}
\DeclareMathAlphabet{\mathsfit}{\encodingdefault}{\sfdefault}{m}{sl}
\SetMathAlphabet{\mathsfit}{bold}{\encodingdefault}{\sfdefault}{bx}{n}
\newtheorem{theorem}{Theorem}
\newtheorem{lemma}[theorem]{Lemma}
\theoremstyle{definition}
\newtheorem{definition}{Definition}[section]
\theoremstyle{remark}
\definecolor{Color1}{RGB}{12,118,30}
\definecolor{Color2}{RGB}{220,97,89}
\definecolor{Color3}{RGB}{0,158,255}
\title{Continual Learning with Recursive Gradient Optimization}
\author{Hao Liu \\
Department of Computer Science\\
Tsinghua University\\
Beijing, China \\
\texttt{hao-liu20@mails.tsinghua.edu.cn} \\
\And 
Huaping Liu \\
Department of Computer Science\\
Tsinghua University\\
Beijing, China \\
\texttt{hpliu@tsinghua.edu.cn} \\
}
\begin{document}

\maketitle

\begin{abstract}

Learning multiple tasks sequentially without forgetting previous knowledge, called Continual Learning (CL), remains a long-standing challenge for neural networks. Most existing methods rely on additional network capacity or data replay. In contrast, we introduce a novel approach which we refer to as Recursive Gradient Optimization (RGO). RGO is composed of an iteratively updated optimizer that modifies the gradient to minimize forgetting without data replay and a virtual Feature Encoding Layer (FEL) that represents different network structures with only task descriptors. Experiments demonstrate that RGO has significantly better performance on popular continual classification benchmarks when compared to the baselines and achieves new state-of-the-art performance on 20-split-CIFAR100 (82.22\%) and 20-split-miniImageNet (72.63\%). With higher average accuracy than Single-Task Learning (STL), this method is flexible and reliable to provide continual learning capabilities for learning models that rely on gradient descent.

\end{abstract}
\section{Introduction}

In many application scenarios, one needs to learn a sequence of tasks without access to historical data, called \emph{continual learning}. Although variants of stochastic gradient descent  (SGD) have made a significant contribution to the progress made by neural networks in many fields, these optimizers require the mini-batches of data to satisfy the independent identically distributed  (i.i.d.) assumption. In continual learning, the violation of this requirement leads to significant degradation of performance on previous tasks, called catastrophic forgetting. Recent works attempt to tackle this issue by modifying the training process from a variety of perspectives.


\textbf{Memory-based} approaches use extra \emph{memory} to store some samples \citep{Lopez-Paz2017,Chaudhry2020}, gradients \citep{Chaudhry2019,Chaudhry2020,Saha2021}, or their generative models  \citep{Shin2017,Shen2020} to modify future training process. The memory for replay leads to a linear-increased space complexity with respect to the number of tasks. \textbf{Expansion-based} approaches select the network parameters dynamically  \citep{Yoon2018,Rosenbaum2018,Serra2018,Kaushik2021}, add additional components as new tasks arrive  \citep{Rusu2016,Fernando2017,Alet2018,Chang2019,Li2019}, or use larger networks to generate network parameters  \citep{Aljundi2017,Yoon2019,VonOswald2019}. These methods reduce interference between tasks by additional task-specific parameters. Single-Task Learning (STL) can also be regarded as an expansion-based method which trains a network for each task separately.
\textbf{Regularization-based} approaches encourage important parameters to lie in a close vicinity of previous solutions by introducing quadratic penalty term to the loss function  \citep{Kirkpatrick2017,Zenke2017,Yin2020} or constraining the direction of parameter update  \citep{Farajtabar2019,Chaudhry2019,Saha2021}. Our method is also regularization-based which combines the advantages of loss penalty and gradient constraint.


In this work, we focus on continual learning in a fixed-capacity network without data replay. We aim to minimize the expected increment of the total loss of past tasks without reducing the performance of the current task. To this end, we propose an upper bound of quadratic loss estimation and design a recursive optimization procedure to modify the direction of the gradient to the optimal solution under this upper bound. In addition, we introduce trace normalization process to guarantee the learning rate during the training process according to the principle of \emph{current-task-first (CFT)}. This normalization process makes our approach compatible with the vast majority of existing models and learning strategies well-designed for single-task solutions. As the gradient modification process is independent of data samples and previous parameters, our optimizer can be used directly in most deep architecture networks as typical single-task optimizers like SGD. 
Further, to reduce the interference between tasks, we develop a feature encoding strategy to represent the multi-modal structure of the network without additional parameters. A virtual \emph{feature encoding layer (FEL)} which randomly permutes the output feature maps using integer task descriptor as seed is attached after each real layer. Thus, each task obtains a specific virtual structure under same network parameters. Since the parameter space of the network is not changed, such a strategy will not change the fitting ability of the neural network. Experimental validations on several continual learning benchmarks show that the proposed method has significantly less forgetting and higher accuracy than existing fixed-capacity baselines. We achieve state-of-the-art performance on 20-split-CIFAR100 (82.22\%) and 20-split-miniImageNet (72.62\%). In addition to minimizing forgetting, this method has comparable or better performance than single-task learning which handles all tasks individually.


\section{Preliminaries}
\label{sec:Preliminaries}

Consider $K$ sequentially arrived supervised learning tasks $\{\mathcal{T}_k|k\in [K]\}$, where $[N]:= \{1,2,\cdots,N\}$ for any positive integer $N$. In each task $\mathcal{T}_k$, there are $n_k$ data points $\{(x_{k,i},y_{k,i})|i\in[n_k]\}$ sampled from an unknown distribution $\mathcal{D}_k$. Let ${\mathcal{X,Y,W}}$ be the space of inputs, targets and model parameters. By denoting the predictor as $f(x,k):\mathcal{X}\times[K]\to \mathcal{Y}$, the loss function of $\theta \in \mathcal{W}$ associated with data point $(x,y)$ and task identifier $k$ can be expressed as $l(f(\theta;x,k),y):\mathcal{W}\to \mathbb{R}$, and the empirical loss function of task $\mathcal{T}_k$ is defined as: 


\begin{equation}
    L_k(\theta) = \frac{1}{n_k}\sum_{i=1}^{n_k}l(f(\theta;k,x_{k,i}),y_{k,i}) 
\end{equation}

In the continual learning scenario studied in this work, the parameter space $\mathcal{W}$ remains a fixed size, and an integer task descriptor is provided at both training and testing time. 
Without access to past samples, we use a second-order Taylor expansion to estimate the loss function of previous tasks.
Let $\theta_j^*$ be the optimal parameter of $L_j(\theta)$ generated by the gradient descent process according to $\nabla_{\theta_j^*}L_j=0$. For a new model parameter ${\theta}$ in the neighborhood of ${\theta_j^*}$, the loss of previous task $\mathcal{T}_j(j < k)$ can be estimated as:
\begin{equation}
    L_j(\theta)= L_j(\theta_j^*)+ \frac{1}{2}(\theta-\theta_{j}^*)^T H_j  (\theta-\theta_{j}^*)
\end{equation}
where $H_j:=\nabla^2L_j(\theta_j^*)$ is the Hessian matrix.




\section{Problem formulation \& solution}
\label{sec:optimizer}
In our fixed-capacity continual learning setting, finding an appropriate joint solution that works well across the task sequence is the core goal. To this end, we introduce a novel continual learning optimization problem and corresponding iterative optimization strategy.

\subsection{Optimization problem}
    In this paper, we formalize forgetting as the the increment of old task losses. As mentioned in Section \ref{sec:Preliminaries}, the total loss of tasks before $\mathcal{T}_k$, denoted as $F_k$, can be estimated by:

    \begin{equation}
        \label{F}
        F_k(\theta) = \sum_{j=1}^{k-1}L_j(\theta) \approx \sum_{j=1}^{k-1}[L_j(\theta_j^*)+ \frac{1}{2}(\theta-\theta_{j}^*)^T H_j  (\theta-\theta_{j}^*)]
    \end{equation}

    As Equation (\ref{F}) need the explicit value of previous model parameters, $F_k$ is too expensive to be an optimization target in continual learning. We turn to a more concise form which we refer to as \emph{recursive least loss (RLL)}:

    \begin{equation}
        F_k^{RLL}(\theta) := \frac{1}{2}(\theta-\theta_{k-1}^*)^T (\sum_{j=1}^{k-1}H_j)(\theta-\theta_{k-1}^*)
    \end{equation}

    In Appendix \ref{apdx:RLL} , we prove that $F_k^{RLL}$ and $F_k$ are equivalent for optimization if all previous tasks are fully-trained. 
    Based on the conclusions above, the optimization problem during task $\mathcal{T}_k$ is formalized as:

    \begin{equation}
        \label{equ:originoptim}
        \theta_k^*: \quad
        \mathop {\min }\limits_\theta F_k^{RLL}(\theta)
        ,\quad \text{subject to } \nabla L_{k}({\theta}) = 0
    \end{equation}

    $F_k^{RLL}$ has the same form as the regularization term in many regularization-based methods. The optimization goal of these methods are variants of $L_{k}({\theta}) + \lambda F_k^{RLL}(\theta)$, derived from Bayesian posterior approximation with Gaussian prior \citep{Kirkpatrick2017,Nguyen2018} or approximation of the KL-divergence used in natural gradient descent \citep{amari1998natural,Ritter2018,Tseran2018b}. The Bayesian methods try to estimate and minimize the overall loss function, while our method prioritizes the performance of the current task by $\nabla L_{k}({\theta}) = 0$ and minimizes the expected forgetting of the past tasks $F_k^{RLL}(\theta)$.


\subsection{Gradient modification}











For the newest task $\mathcal{T}_{k}$, the optimal solution where $\nabla L_{k}(\theta) = 0$ should be obtained by stochastic gradient descent started from the former optimal model parameter ${\theta_{k-1}^*}$ at the end of task $\mathcal{T}_{k-1}$. Using subscript $\cdot_{i}$ to represent the parameters of step $i$, and the initial state $\theta_0=\theta_{k-1}^*$, the single step update can be expressed as:
\[
\theta_{i} = \theta_{i-1} -\eta_{i} \nabla L_{k}(\theta_{i-1})
\]

     Assume that the pre-set learning rate $\eta_i$ is small enough to ignore the higher order terms, the loss function after the one-step update can be expressed as:

     \[
        L_{k}(\theta_i) = L_k(\theta_{i-1}) -\eta_{i}(\nabla L_{k}(\theta_{i-1}))^T \nabla L_{k}(\theta_{i-1})
        \]

     If we hope to solve the task $\mathcal{T}_{k}$ only, updating the parameter $\theta$ according to the gradient above is enough. However, as mentioned above, such a method will encourage the neural network to gradually forget the old tasks. Therefore, we modify the update direction to minimize the expectation of forgetting. To this end, we introduce a new positive definite symmetric matrix $P$ with appropriate dimensions to modify the gradients ($g \to Pg$). The modified one-step update is:
     \begin{equation}
         \left\{
         \begin{aligned}
     &\theta_{i} = \theta_{i-1} -\eta_{i} P \nabla L_{k}(\theta_{i-1}) \\
     &L_{k}(\theta_i) = L_k(\theta_{i-1}) -\eta_{i}(\nabla L_{k}(\theta_{i-1}))^T P\nabla L_{k}(\theta_{i-1})
         \end{aligned}
         \right.
        \end{equation}

    To maintain the pre-set learning rate during the continual learning problem and avoid repetitive selection of hyper-parameters, we impose an additional constraint on the trace of the projection matrix and prove the corresponding convergence rate consistency theorem \ref{learningrate} in Appendix \ref{apdx:learningrate}. 

    \begin{theorem}[convergence rate consistency]
        \label{learningrate}
        Under the constraint of \emph{trace($P$)=dim($P$)}, the expectation of the learning rate for unknown isotropic distribution is the same as the original optimizer.
    \end{theorem}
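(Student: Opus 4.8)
The plan is to first pin down the notion of \emph{effective learning rate} implied by the one-step update. Writing $g := \nabla L_{k}(\theta_{i-1})$, the displayed modified update decreases the loss in one step by $\eta_i\, g^T P g$, whereas the unmodified optimizer run with some learning rate $\tilde\eta_i$ would decrease it by $\tilde\eta_i\, g^T g$. Declaring the modified step to be ``equivalent'' to an unmodified step means matching these first-order decreases, which forces
\[
\tilde\eta_i \;=\; \eta_i\,\frac{g^T P g}{g^T g}.
\]
This ratio depends only on the direction $u := g/\lVert g\rVert$, so $\tilde\eta_i = \eta_i\, u^T P u$, and the whole theorem reduces to computing $\E[u^T P u]$.

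Next I would invoke isotropy. If $g$ is drawn from any (non-degenerate) isotropic distribution on $\R^{n}$ with $n = \dim(P)$, then $g$ is rotation-invariant in law, hence so is $u$; and the unique rotation-invariant distribution on the unit sphere is the uniform one. Therefore, by linearity of trace and expectation,
\[
\E[\tilde\eta_i] \;=\; \eta_i\,\E\!\left[\Tr\!\big(P\, u u^T\big)\right] \;=\; \eta_i\,\Tr\!\big(P\, \E[u u^T]\big).
\]
The one real computation is $\E[u u^T] = \tfrac1n I$: the matrix $M := \E[u u^T]$ is unchanged under $M \mapsto R M R^T$ for every rotation $R$ (since $Ru \stackrel{d}{=} u$), and the only matrices commuting with all rotations are scalar multiples of the identity, so $M = cI$; taking the trace gives $1 = \E[\lVert u\rVert^2] = cn$, i.e. $c = 1/n$.

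Combining the two displays, $\E[\tilde\eta_i] = \tfrac{\eta_i}{n}\,\Tr(P)$, and the imposed constraint $\Tr(P) = \dim(P) = n$ yields $\E[\tilde\eta_i] = \eta_i$, which is exactly the learning rate of the original ($P = I$) optimizer. Note the answer $\Tr(P)/n$ is independent of which isotropic distribution was used, matching the ``unknown isotropic distribution'' in the statement.

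The main obstacle is not the computation — it is a two-line symmetry argument — but fixing the right definitions, and I would state these at the very top of the proof. Specifically, one must agree to measure the ``learning rate'' through the induced first-order loss decrease rather than, say, through the step length $\lVert \eta_i P g\rVert = \eta_i \lVert g\rVert \sqrt{u^T P^2 u}$, for which the same symmetry gives only $\E[\sqrt{u^T P^2 u}]$, and Jensen plus the power-mean inequality (using $\Tr(P) = n$) yields an inequality, not equality. One must also commit to isotropy being an assumption on the \emph{gradient} direction. Once these modeling choices are made explicit, the identity $\E[u u^T] = I/n$ carries the rest of the argument.
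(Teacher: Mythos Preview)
Your argument is correct, and at its core it is the same isotropy computation the paper performs: both show that $\E[g^T P g]$ (suitably normalized) equals $\tfrac{1}{n}\Tr(P)$ and then invoke $\Tr(P)=n$. The packaging differs in two small ways worth noting.

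First, the paper does not normalize to the unit sphere. It works with the raw gradient $x:=\nabla L_k$, diagonalizes $P=V^T\Lambda_P V$, uses rotation invariance of the isotropic law to replace $Vx$ by $x$, and then uses the coordinate symmetry $\E[x_i^2]=\tfrac{1}{n}\E[\|x\|^2]$ to obtain $\E[x^T P x]=\tfrac{\Tr(P)}{n}\E[x^T x]$. Your route via $\E[uu^T]=\tfrac{1}{n}I$ and $\Tr(P\,\E[uu^T])$ reaches the same number more compactly, without the explicit eigendecomposition, and it buys you the extra observation that the result depends only on the gradient \emph{direction}.

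Second, and more substantively, you and the paper are computing slightly different quantities. The paper compares the expected one-step loss decreases, i.e.\ shows the \emph{ratio of expectations} $\E[g^TPg]\big/\E[g^Tg]$ equals $\Tr(P)/n$; you define a per-step effective rate $\tilde\eta_i=\eta_i\,g^TPg/g^Tg$ and compute the \emph{expectation of the ratio}. Both equal $\Tr(P)/n$ under isotropy (since the ratio depends only on $u$, which is independent of $\|g\|$), so either reading of ``expectation of the learning rate'' is justified and your added discussion of which functional to average is a useful clarification the paper omits.
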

    
    For common network structures, we provide the space complexity of matrix P and the time complexity of the corresponding gradient modification process in Appendix \ref{apdx:complexity}. As described above, the only extra memory of our approach is the projection matrix $P$, which contains the information of previous tasks. This allows our approach to be a space-invariant method different with typical \emph{memory-based} or \emph{expansion-based} continual learning method. As the performance of our method is identified with the choice of $P$, the following problem is: \emph{How to find a good projection matrix?} We will answer this question in the following parts.

\subsection{Approximate solution}


    The next step is to find a solution for problem (\ref{equ:originoptim}). When the training process on $\mathcal{T}_k$ is finished, the final state ${\theta_k^*}$ and the residual loss can be obtained from the accumulation of one-step update:

    \begin{equation}
        \label{equ:totalupdate}
         \left\{
         \begin{aligned}
     &\theta_{k}^* = \theta_{k-1}^* - \sum_{i=1}^{n_k} \eta_{i} P \nabla L_{k}(\theta_{i-1}) \\
     &L_{k}(\theta_k^*) = L_k(\theta_{k-1}^*) -\sum_{i=1}^{n_k} \eta_{i}(\nabla L_{k}(\theta_{i-1}))^T P \nabla L_{k}(\theta_{i-1})
         \end{aligned}
         \right.
    \end{equation}

    In this way, for a given sample sequence and initial value $\theta_{k-1}^*$, the result of $\theta_k^*$ depends on $P$. The optimization problem \ref{equ:originoptim} on $\theta_k^*$ is transformed into an optimization problem on $P$.



    However, the relationship between $F_k^{RLL}(\theta_k^*)$ and $P$ is too complicated to be used in optimization process. To tackle this problem, we propose an upper bound of $F_k^{RLL}(\theta_k^*)$ as a practical optimization target. 

    \begin{theorem}[upper bound]
        Denote $\hat{\sigma}_{m}(\cdot)$ as the symbol for maximum eigenvalue and $\eta_m$ as the maximum single-step learning rate, the recursive least loss has an upper bound:
        \label{upperbound}
        \begin{equation}
            F_k^{RLL}(\theta_k^*)\leq \frac{1}{2}n_k\eta_m\hat{\sigma}_{m}(P\bar{H})L_k(\theta_{k-1}^*)
        \end{equation}
        where $ \bar{H} = \sum_{j=1}^{k-1}H_j $ is defined as the sum of the Hessian matrices of all old tasks.
    \end{theorem}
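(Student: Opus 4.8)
The plan is to turn $F_k^{RLL}(\theta_k^*)$ into a quadratic form in the \emph{total} displacement produced by the modified updates, and then to control that form with one Rayleigh-quotient estimate and one Cauchy--Schwarz estimate, feeding in the loss-decrease identity of~(\ref{equ:totalupdate}) only at the very end.

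First I would fix notation: write $g_i := \nabla L_k(\theta_{i-1})$ for the gradients along the optimization trajectory on $\mathcal{T}_k$, and set $u := \sum_{i=1}^{n_k}\eta_i g_i$. The first line of~(\ref{equ:totalupdate}) then reads $\theta_k^*-\theta_{k-1}^* = -Pu$, and substituting into $F_k^{RLL}(\theta)=\tfrac12(\theta-\theta_{k-1}^*)^{T}\bar H(\theta-\theta_{k-1}^*)$ (using $P^{T}=P$) gives the clean identity
\begin{equation}
F_k^{RLL}(\theta_k^*)=\tfrac12\, u^{T}P\bar H P\, u .
\end{equation}
The second line of~(\ref{equ:totalupdate}), together with $L_k(\theta_k^*)\ge 0$, supplies the only other fact I need: $\sum_{i=1}^{n_k}\eta_i\, g_i^{T}Pg_i = L_k(\theta_{k-1}^*)-L_k(\theta_k^*)\le L_k(\theta_{k-1}^*)$.

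Next I would bound $u^{T}P\bar H P\,u$. Using the symmetric square root $P^{1/2}$, note that $P\bar H$ is similar to the symmetric positive-semidefinite matrix $P^{1/2}\bar H P^{1/2}$ (each $H_j$, being a Hessian at an optimum, is positive semidefinite, hence so is $\bar H$), so the two share a spectrum and in particular $\hat\sigma_{m}(P\bar H)=\hat\sigma_{m}(P^{1/2}\bar H P^{1/2})\ge 0$ is a genuine maximum eigenvalue. Writing $u^{T}P\bar H P\,u=(P^{1/2}u)^{T}(P^{1/2}\bar H P^{1/2})(P^{1/2}u)$ and applying the Rayleigh bound yields $u^{T}P\bar H P\,u\le \hat\sigma_{m}(P\bar H)\, u^{T}Pu$. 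Then I would estimate $u^{T}Pu=\langle u,u\rangle_P$ with $\langle a,b\rangle_P:=a^{T}Pb$: by the power-mean form of Cauchy--Schwarz, $\langle\sum_{i=1}^{n_k}\eta_i g_i,\sum_{i=1}^{n_k}\eta_i g_i\rangle_P\le n_k\sum_{i=1}^{n_k}\eta_i^{2}\, g_i^{T}Pg_i$, and since $\eta_i\le\eta_m$ forces $\eta_i^{2}\le\eta_m\eta_i$, this is at most $n_k\eta_m\sum_{i=1}^{n_k}\eta_i\, g_i^{T}Pg_i$.

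Chaining the three estimates and then inserting the loss-decrease inequality gives
\begin{equation}
F_k^{RLL}(\theta_k^*)\le \tfrac12\hat\sigma_{m}(P\bar H)\, n_k\eta_m\sum_{i=1}^{n_k}\eta_i\, g_i^{T}Pg_i \le \tfrac12 n_k\eta_m\hat\sigma_{m}(P\bar H)\, L_k(\theta_{k-1}^*),
\end{equation}
which is the assertion. The main obstacle is the middle inequality: the loss-decrease identity only controls the scalar $\sum_i\eta_i g_i^{T}Pg_i$ (the work done against $P$ along the path), whereas $F_k^{RLL}$ is a quadratic form in $\bar H$, so the proof must bridge these two objects; the Cauchy--Schwarz step that does this is where the factor $n_k$ enters, and it is unavoidable in general (it is attained when all per-step updates $\eta_i P g_i$ are parallel). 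A secondary point worth flagging is that the updates in~(\ref{equ:totalupdate}) already discard higher-order terms in the learning rate, so the bound should be read as holding to leading order in $\eta$; I would state this caveat explicitly rather than attempt to carry a remainder term.
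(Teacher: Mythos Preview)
Your proof is correct and follows essentially the same route as the paper: express $F_k^{RLL}(\theta_k^*)$ as $\tfrac12 u^T P\bar H P u$ with $u=\sum_i\eta_i g_i$, use a Rayleigh bound to extract $\hat\sigma_m(P\bar H)$ and leave $u^TPu$, apply the Cauchy--Schwarz/power-mean inequality in the $P$-inner product to pick up the $n_k$ factor, pull out $\eta_m$, and finish with the loss-decrease inequality $\sum_i\eta_i g_i^TPg_i\le L_k(\theta_{k-1}^*)$. Your justification of the Rayleigh step via the similarity $P\bar H\sim P^{1/2}\bar H P^{1/2}$ is a welcome detail the paper leaves implicit, and your remarks on the tightness of the $n_k$ factor and the small-$\eta$ caveat are accurate additions.
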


    We prove theorem \ref{upperbound} in Appendix \ref{apdx:upperbound}. Discarding the constant terms, we get an alternative optimization problem for projection matrix ${P}$:


    \begin{equation}
        \label{problem:optimization}
        P: \left\{
        \begin{aligned}
        &\mathop {\min }\limits_P \quad \hat{\sigma}_{m}(P\bar{H})  \\
        &\text{subject to}\quad \text{trace}(P)=\text{dim}(P)
        \end{aligned}
        \right.
    \end{equation}



    We provide a detailed solution of this problem in Appendix \ref{apdx:solution}. The normalized solution is:

\begin{equation}
    P=\frac{dim(\bar{H})}{trace(\bar{H}^{-1})}\bar{H}^{-1} 
\end{equation}







The normalized projection procedure can be described as: finding a new gradient that has similar effects on the current task to minimize the upper bound of the old task losses. Our optimizer only modifies the direction of the gradient without reducing the search region, which will guarantee the fitting ability of the network consistent throughout the task sequence.


\section{Implementation}

The main concern of our method in section \ref{sec:optimizer} lies in the expensive space and time cost in deep neural networks. In this section, we propose two approaches to reduce the time and space complexity of the algorithm for models with forward-backward propagation structures. 

\subsection{Virtual Feature Encoding Layer}


In multi-layer networks, the output of the previous layer can be regarded as a set of features generated by the feature extractors(for example, weight matrices, bias vectors, convolution kernels, etc). 
In order to make the gradients generated in back-propagation process conform to our assumption of isotropic distribution(Theorem \ref{learningrate}), we propose a virtual \textbf{Feature Encoding Layer(FEL)} to apply task-specific connections to the output of the previous layer and the input of the next layer. 

\begin{definition}[Feature Encoding Layer]
    A feature encoding layer applies a task-specific rearrangement to the input feature maps, the order of which is randomly generated using the task identifier as a seed.
\end{definition}


Note that FEL is only a permutation of existing feature maps, and its order does not change during the training process. Although this feature encoding layer does not require extra space, we write it in matrix form for the convenience of theoretical analysis. The permutation matrix at layer $l(l=1,2,\cdots,L)$ is:

\[
    S_{l}(k):= \text{random permutation of}\ I_{l}\ \text{with seed}(k)\]

where $I_l$ is an identity matrix with dimensions equal to the number of feature maps. Considering that the order of the features is critical for the next layer to obtain interpretable information, the recognition ability of the network will degrade greatly without correct feature order. FEL provides an effective way to eliminate the interference between tasks, where features encoded by a specific task descriptor will be randomly permuted in other tasks. Therefore, the same feature extractor plays different roles in different tasks. Although gradients of different layer are strongly correlated in the current task, there is little correlation from the perspective of old tasks. That means current impact on past tasks can be regarded as the independent summation of the influence from different local feature extractors. 

\subsection{Local-global equivalence}
\label{subsec:localglobal}
Then, during the backward propagation process of task $\mathcal{T}_k$, the gradient of $L_k$ on an intermediate layer $h_l$ can be calculated by the chain rule:

\begin{equation}
    g_l=\frac{\partial L_k}{\partial h_l} = \frac{\partial L_k}{\partial h_L} \prod_{j=l}^{L-1}\frac{\partial h_{j+1}}{\partial h_j} = g_L \prod_{j=l}^{L-1} S_{j+1}(k)D_{j+1}W_{j+1}
\end{equation}

where $D_{j+1}$ is a diagonal matrix representing the derivative of the nonlinear activation function. 
Since previous samples cannot be accessed to recalculate the gradient, common gradient-based methods \citep{Li2019,Azizan2019a,Farajtabar2019} assume that the joint optimal parameter lies in the neighborhood of the previous optimal parameter and use previously calculated gradients as an approximation, which leads to $\nabla f(\theta;x) \approx \nabla f(\theta^*;x)$ and $\frac{\partial^2 h_L}{\partial h_l^2} \approx 0 \label{approximate:grad1}$. We follow this assumption and use the proposed optimizer to ensure this assumption as much as possible. Thus we have:

\begin{equation}
    \begin{aligned}
    \frac{\partial^2 L_k}{\partial h_l \partial h_r} &=(\frac{\partial h_L}{\partial h_l})^T \frac{\partial^2 L_k}{(\partial h_L)^2} (\frac{\partial h_L}{\partial h_r}) 
    \end{aligned}
\end{equation}

\begin{equation}
    \label{simplified Hessian}
    \bar{H}_l=\sum_{j=1}^{k-1} \frac{\partial^2 L_j}{\partial h_l^2}=\sum_{j=1}^{k-1}\sum_{i=1}^{n_j}(\frac{\partial h_L}{\partial h_l})^T l''(f(\theta;x);y) \frac{\partial h_L}{\partial h_l} +\alpha I_l
\end{equation}

where $\alpha$ is a penalty parameter to ensure the positive definiteness of $\bar{H}_l$. We set $\alpha = 1$ in all subsequent sections according to the trace normalization proposed in Section \ref{sec:optimizer}.

Further, we can decompose the global optimization problem into independent sub-problems at each layer. We introduce Theorem \ref{localequivalence} and prove it in Appendix \ref{apdx:localequivalence}:

\begin{theorem}[Local-Global Equivalence]
    \label{localequivalence}
    Under the assumption of close vicinity, the global optimization problem is equivalent to independent local optimization problems. The local optimal projection matrix of layer $l$ is:
    \[
        P_l=\frac{\text{dim}(\bar{H_l})}{\text{trace}(\bar{H_l}^{-1})}\bar{H_l}^{-1} ,\quad \text{where} \  \bar{H_l}=\sum_{j=1}^{k-1} H_{j,l}
    \]
\end{theorem}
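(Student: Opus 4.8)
The plan is to exploit the block structure that the Feature Encoding Layers impose on $\bar{H}$, reduce problem (\ref{problem:optimization}) to a search over block-diagonal matrices, and then decouple it layer by layer. I would proceed in four steps.

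\emph{Step 1 (block-diagonality of $\bar{H}$).} Starting from the chain-rule expression $g_l=g_L\prod_{j=l}^{L-1}S_{j+1}(k)D_{j+1}W_{j+1}$ together with the close-vicinity approximation $\partial^2 h_L/\partial h_l^2\approx 0$ used to obtain (\ref{simplified Hessian}), I would examine the mixed second derivative $\partial^2 L_j/(\partial h_l\partial h_r)$ for $l\neq r$. This is a product that carries the permutation factors $S(j)$ generated from the task-$j$ seed; for an old task $j<k$ these permutations scramble the feature order relative to the current task, so the cross-layer blocks average out while the diagonal blocks $\bar{H}_l$ accumulate coherently. The conclusion is $\bar{H}\approx\mathrm{diag}(\bar{H}_1,\dots,\bar{H}_L)$, which is moreover positive definite thanks to the $\alpha I_l$ term in (\ref{simplified Hessian}). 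I expect this to be the main obstacle: it is exactly where the FEL construction and the ``close vicinity'' assumption do the real work, and making precise the claim that the off-diagonal blocks are negligible (e.g.\ vanishing in expectation over seeds, with a controlled remainder) is the delicate part; the rest of the argument is essentially linear algebra.

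\emph{Step 2 (reduction to block-diagonal $P$).} For positive definite $\bar{H}$ I would use the variational identity $\hat{\sigma}_m(P\bar{H})=\max_{u\neq 0}(u^TPu)/(u^T\bar{H}^{-1}u)$, which follows from the similarity $P\bar{H}=\bar{H}^{-1/2}(\bar{H}^{1/2}P\bar{H}^{1/2})\bar{H}^{1/2}$ and the substitution $u=\bar{H}^{1/2}v$. Given any feasible $P$, let $\tilde{P}=\mathrm{diag}(P_1,\dots,P_L)$ be its block-diagonal part; each $P_l$ is a principal submatrix of $P$, hence positive definite, and $\mathrm{trace}(\tilde{P})=\mathrm{trace}(P)$. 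Restricting the maximum above to vectors supported on a single block, and noting that $\bar{H}^{-1}$ is block-diagonal, gives $\hat{\sigma}_m(\tilde{P}\bar{H})=\max_l\hat{\sigma}_m(P_l\bar{H}_l)$; since every single-block test vector is also feasible for the full maximum, $\hat{\sigma}_m(\tilde{P}\bar{H})\le\hat{\sigma}_m(P\bar{H})$. Thus we may assume $P=\mathrm{diag}(P_1,\dots,P_L)$ without loss of generality, $\hat{\sigma}_m(P\bar{H})=\max_l\hat{\sigma}_m(P_l\bar{H}_l)$, and the constraint $\mathrm{trace}(P)=\mathrm{dim}(P)$ becomes $\sum_l\mathrm{trace}(P_l)=\sum_l\mathrm{dim}(P_l)$.

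\emph{Step 3 (solving the coupled minimax and identifying the local solution).} For a fixed per-layer trace budget $\mathrm{trace}(P_l)=t_l$, the inner problem $\min\hat{\sigma}_m(P_l\bar{H}_l)$ over positive definite $P_l$ is a rescaling of problem (\ref{problem:optimization}) restricted to layer $l$, whose minimizer, by Appendix \ref{apdx:solution}, is $P_l=(t_l/\mathrm{trace}(\bar{H}_l^{-1}))\,\bar{H}_l^{-1}$ with optimal value $t_l/\mathrm{trace}(\bar{H}_l^{-1})$. The outer problem $\min\max_l t_l/\mathrm{trace}(\bar{H}_l^{-1})$ subject to $\sum_l t_l=\sum_l\mathrm{dim}(P_l)$ is solved by equalizing the layer values, which forces $P_l\propto\bar{H}_l^{-1}$ on every layer. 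Finally, invoking the trace-normalization principle of Theorem \ref{learningrate} to fix the scale of each layer's update independently replaces the common scalar by the per-layer one, yielding $P_l=(\mathrm{dim}(\bar{H}_l)/\mathrm{trace}(\bar{H}_l^{-1}))\,\bar{H}_l^{-1}$ --- precisely the solution of the standalone layer-$l$ problem $\min\hat{\sigma}_m(P_l\bar{H}_l)$ under $\mathrm{trace}(P_l)=\mathrm{dim}(P_l)$. Since each step was an equivalence, up only to the per-layer scalar that trace normalization absorbs, the global optimization problem and the family of independent local problems have the same solution, which is Theorem \ref{localequivalence}.
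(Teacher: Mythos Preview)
Your route is genuinely different from the paper's. You first argue that $\bar{H}$ is (approximately) block-diagonal via FEL decorrelation, then attack the spectral objective $\hat{\sigma}_m(P\bar{H})$ directly, using the Rayleigh-quotient identity $\hat{\sigma}_m(P\bar{H})=\max_u (u^TPu)/(u^T\bar{H}^{-1}u)$ to show that passing to the block-diagonal part of $P$ can only help. The paper does \emph{not} discard the off-diagonal blocks of $\bar{H}$. Instead it takes $P=\mathrm{diag}(P_1,\dots,P_L)$ as given, keeps the factored cross terms $\partial^2 L_k/(\partial h_l\partial h_r)=(\partial h_L/\partial h_l)^T l''\,(\partial h_L/\partial h_r)$ from Section~\ref{subsec:localglobal}, diagonalizes each $\bar{H}_l=U_l^T\Lambda_{\bar{H}_l}U_l$ and $P_l=V_l^T\Lambda_{P_l}V_l$, imposes the same alignment $V_lU_l^T=I$ already used in Appendix~\ref{apdx:solution}, and bounds the full bilinear form $\sum_{l,r}g_l^TP_l\bigl(\partial^2 L_k/(\partial h_l\partial h_r)\bigr)P_rg_r\le\max_l\hat{\sigma}_m(P_l\bar{H}_l)\cdot g_\theta^TPg_\theta$; this recreates the shape of problem~(\ref{problem:optimization}) with $\max_l\hat{\sigma}_m(P_l\bar{H}_l)$ in the role of $\hat{\sigma}_m(P\bar{H})$, and the per-layer formula follows. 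Your approach buys a clean, rigorous decoupling once block-diagonality of $\bar{H}$ is granted, and it actually justifies the restriction to block-diagonal $P$, which the paper simply assumes. The paper's approach buys freedom from that block-diagonality hypothesis --- the step you rightly flag as delicate --- at the price of the heuristic eigenbasis alignment $V_l=U_l$ and a rather informal treatment of how the per-sample cross-block factorization survives the sum over past tasks that defines $\bar{H}$.
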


\subsection{Iterative update}



    Considering that the complexity of calculating inverse matrix with dimension of $n$ is O$(n^3)$, it is time-consuming to calculate the inverse Hessian matrix $\bar{H}_l^{-1}$ in practice. Instead, we update the projection matrix $P_l$ iteratively like \emph{recursive least square(RLS)} \citep{Simon2002} algorithm at training step(see Appendix \ref{apdx:quadratic} for more details). This allows RGO to be an online algorithm with linear memory complexity and single-step time complexity in the number of model parameters. We summarize the gradient modification and the iterative update of the projection matrix to Algorithm \ref{alg:algorithm}.


    Note that feature extractors in the same layer share a projection matrix calculated by the average of the gradients considering their linear correlation. In this way, handling multiple gradients in the same layer does not increase the complexity of updating projection. We list the memory size and the single step-time complexity for different kinds of feature extractors in Appendix \ref{apdx:complexity}. It is worth mentioning that, because the local optimizers of different layers are independent, after obtaining the back-propagation gradients, the gradient modification process of different layers can be processed in parallel to further reduce the time required.

        \begin{algorithm}[ht]
            \caption{Learning Algorithm of Recursive Gradient Optimization}
            \label{alg:algorithm}
            \textbf{Input}: Task sequence ${\mathcal{T}_k}, k=1,2,\cdots$\\
            \textbf{Output}: optimum parameter $\theta_k^*$
            \begin{algorithmic}[1] 
            \STATE $P_l(0) \leftarrow I_l$. $\theta \leftarrow \theta_0^*$ is randomly initialized.
            \FOR{$k=1, 2, \cdots$}
            \FOR{$(x,y)\sim \mathcal{D}_k$ }
    
            \STATE $g_l= \frac{\partial L_k}{\partial h_l},\quad l=1,2,\cdots,L$ \hfill $\leftarrow$ Stochastic/Batch Gradient of current loss
            \STATE $\hat{g}_l = P_{l} \cdot g_l \cdot \text{dim}(P_{l})/\text{trace}(P_{l})$ \hfill $\leftarrow$ modify origin gradients
            \STATE $\theta \leftarrow \theta-\eta \hat{g}$ \hfill $\leftarrow$update the model parameters
            \ENDFOR
            \FOR{$(x,y)\sim \mathcal{D}_k$ }
            \STATE $g_l= [l''(f(\theta;x);y)]^{\frac{1}{2}}\frac{\partial h_L}{\partial h_l}, \quad l=1,2,\cdots,L$ \hfill $\leftarrow$ get local gradient by back-propagation 
            \STATE $k_l=P_l\cdot g_l/(\alpha+g_l^T P_l g_l)$
            \STATE $P_l \leftarrow P_l-k_l g_l^T P_l$ \hfill $\leftarrow$Update projection matrix
            \ENDFOR
            \STATE $\theta_k^* \leftarrow \theta$ \hfill $\leftarrow$get the optimal model parameter 
    
            \ENDFOR
            \end{algorithmic}
            \end{algorithm}
\section{Experiment setup}

\paragraph{Benchmarks:}
We evaluate the performance of our approach on four supervised continual learning benchmarks. Permuted MNIST \citep{Goodfellow2014,Kirkpatrick2017} and Rotated MNIST \citep{Chaudhry2020} are variants of MNIST dataset of handwritten digits \citep{Lecun1998} with 20 tasks applying random permutations of the input pixels and random rotations of the original images respectively. Split-CIFAR100 \citep{Zenke2017} is a random division of CIFAR100 into 20 tasks, each task has 5 different classes. Split miniImageNet, introduced by \citep{Chaudhry2020}, applies a similar division on a subset of the original ImageNet \citep{Russakovsky2015} dataset.



\paragraph{Baselines:}
In this work, we perform experiments on the benchmarks above with the following fixed capacity methods and an expansion-based method for comparison:
(1) SGD which uses stochastic gradient descent optimizing procedure to finetune the model,
(2) EWC \citep{Kirkpatrick2017} which is one of the pioneering regularization methods using fisher information diagonals as important weights,
(3) A-GEM \citep{Chaudhry2019} which uses loss gradients of stored previous data in an in-equality constrained optimization,
(4) LOS \citep{Chaudhry2020} which constraints gradients in a low-rank orthogonal subspace,
(5) ER-ring \citep{Chaudhry2019b} which utilizes a tiny ring memory to alleviate forgetting,
(6) GPM \citep{Saha2021} which trains new tasks in the residual gradient subspace,
(7) APD \citep{Yoon2019} which is a strong expansion-based method decomposing the parameters of different tasks with a common basis,
and (8) STL which trains a model for each single task. 
For the compared methods, we follow the original implementations to perform some necessary processing at the end of every task. The storage for memory-based methods is set to 1 sample per class per task following \citet{Chaudhry2020}.



\paragraph{Metrics:} We use average accuracy(ACC) and average accuracy decline, also called backward transfer(BWT) by \cite{Lopez-Paz2017}, to evaluate the classification performance. Denote the accuracy of task $k$ at the end of task $T$ as $R_{T,k}$, ACC and BWT are defined as:
\[
    \text{ACC} = \frac{1}{T}\sum_{k=1}^T R_{T,k}, \quad \text{BWT} = \frac{1}{T-1}\sum_{k=1}^T R_{T,k}-R_{k,k}
\]

\paragraph{Architectures and training details:}
We evaluate all of the continual learning methods for the same network architectures. The model is a three-layer fully connected network with 256 hidden units in MNIST experiment and a standard ResNet18 \citep{He2016} in CIFAR and ImageNet experiments. For RGO, we add a virtual feature encoding layer attached to each layer. MNIST variants are trained 1000 steps while CIFAR and miniImageNet are trained 2000 steps. Batchsize is set at 10 for all tasks. The task identifiers are provided for both training and testing time. All results are reported across 5 runs with different seeds. See Appendix \ref{sec:experiment} for more details.

\section{Results \& Discussions}

\begin{table}[h]
    \caption{Performance of different methods on Permuted MNIST/ Rotated MNIST/ Split Cifar100/ Split mini-Imagenet. The model is trained with 20 tasks for 5 different seeds and evaluated by final average accuracy with stds. ($^*$) denotes methods with additional network capacity. RGO-2 is a version without FEL for ablation experiments}
    \label{tab:overall}
    \centering
        \begin{tabular}{lccccc}
            & &\multicolumn{2}{c}{\bf Permuted MNIST} & \multicolumn{2}{c}{\bf Rotated MNIST}                  \\
             \hline
             {\bf Method} &{\bf Replay} & \bf{ACC$_{test}$(\%)} & {\bf BWT(\%)}  & \bf{ACC$_{test}$(\%)} & {\bf BWT(\%)} \\ 
            \hline
            RGO &N & \textbf{91.15}($\pm$0.20) &-2.05($\pm$0.09) & \textbf{91.25}($\pm$0.01) & -1.59($\pm$0.01)\\
            RGO-2 &N& 87.95($\pm$0.01) &-5.65($\pm$0.38) & 72.26($\pm$0.95)&-20.74($\pm$0.01) \\
            GPM  &N& 83.29($\pm$0.01) &-8.45($\pm$0.01) & 70.02($\pm$0.95)&-17.95($\pm$0.01) \\
            LOS$^*$  &N& 86.56($\pm$0.38) &-4.10($\pm$0.33) & 80.21($\pm$1.11)&-13.44($\pm$1.06) \\
            ER-Ring  &Y& 79.84($\pm$0.63) &-12.88($\pm$0.65)& 69.20($\pm$0.79)&-25.93($\pm$0.85)\\
            AGEM  &Y& 72.32($\pm$1.04)  &-19.94($\pm$1.02)& 53.26($\pm$1.00)&-41.74($\pm$0.96)\\
            EWC &N&67.79($\pm$1.60)   &-24.38($\pm$1.53)& 43.27($\pm$0.66)&-50.74($\pm$0.76) \\
            SGD &N&46.11($\pm$3.91)  &-46.06($\pm$4.00)&  44.82($\pm$0.01) &-50.18($\pm$0.01) \\
            \hline
            STL &N& 91.33($\pm$0.20) &0.0& 91.09($\pm$0.01) &0.0 \\

        \end{tabular}

        \vskip 0.1in
        \centering
            \begin{tabular}{lccccc}
                & &\multicolumn{2}{c}{\bf Split CIFAR100} & \multicolumn{2}{c}{\bf Split ImageNet}                  \\
             \hline
             {\bf Method} &{\bf Replay} & \bf{ACC$_{test}$(\%)} & {\bf BWT(\%)}  & \bf{ACC$_{test}$(\%)} & {\bf BWT(\%)} \\ 
                \hline
                RGO & N & \textbf{73.18}($\pm$0.51)&-1.67 ($\pm$0.29)& \textbf{70.33}($\pm$0.87)& -1.64 ($\pm$0.41) \\
                RGO-2 & N & 62.82($\pm$0.98) &-15.91 ($\pm$0.92)& 57.40($\pm$1.90)&-22.40 ($\pm$1.89) \\
                GPM  & N & 53.41($\pm$2.87) &-27.98 ($\pm$3.14)& - & - \\
                LOS$^*$ & Y & 56.20($\pm$1.12) &-25.46 ($\pm$1.14)& 43.25($\pm$2.29)&-34.75 ($\pm$2.69) \\
                ER-Ring & Y &53.74($\pm$2.13)&-28.15 ($\pm$2.02)& 45.88($\pm$2.39)&-29.21 ($\pm$1.63) \\
                AGEM & Y& 49.56($\pm$2.64) &-32.10 ($\pm$2.73)& 34.67($\pm$0.52)& -38.06 ($\pm$0.87)\\
                EWC &N &   47.71($\pm$1.70)&-25.17 ($\pm$1.50)& 32.61($\pm$3.67)&-24.95 ($\pm$3.62)\\
                SGD &N &    37.02($\pm$1.64)&-44.34($\pm$1.55) & 37.69($\pm$1.00)&-37.23($\pm$0.72) \\
                \hline
                STL & N & 74.90($\pm$0.73)& 0.0 & 67.76($\pm$1.70) &0.0\\
            \end{tabular}
\end{table}

\begin{figure}[htbp]
    \centering
    \subfloat[Permuted MNIST]{\includegraphics[width=0.47\textwidth]{./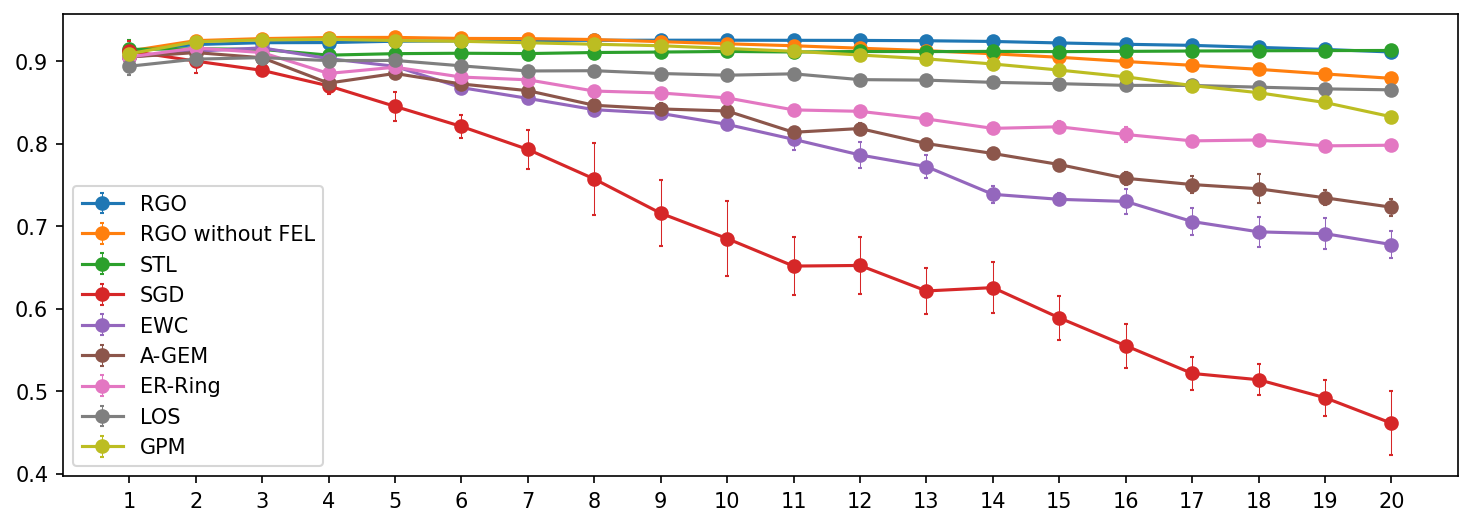}} \quad \subfloat[Rotated MNIST]{\includegraphics[width=0.47\textwidth]{./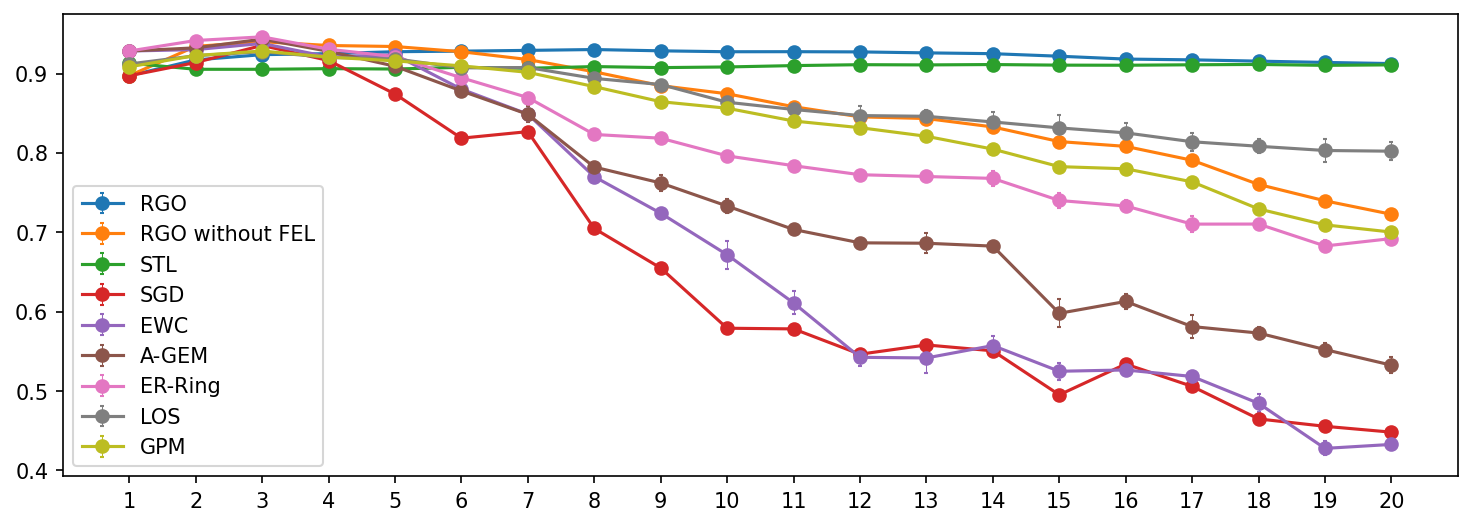}} \\
      \subfloat[Split CIFAR100]{\includegraphics[width=0.47\textwidth]{./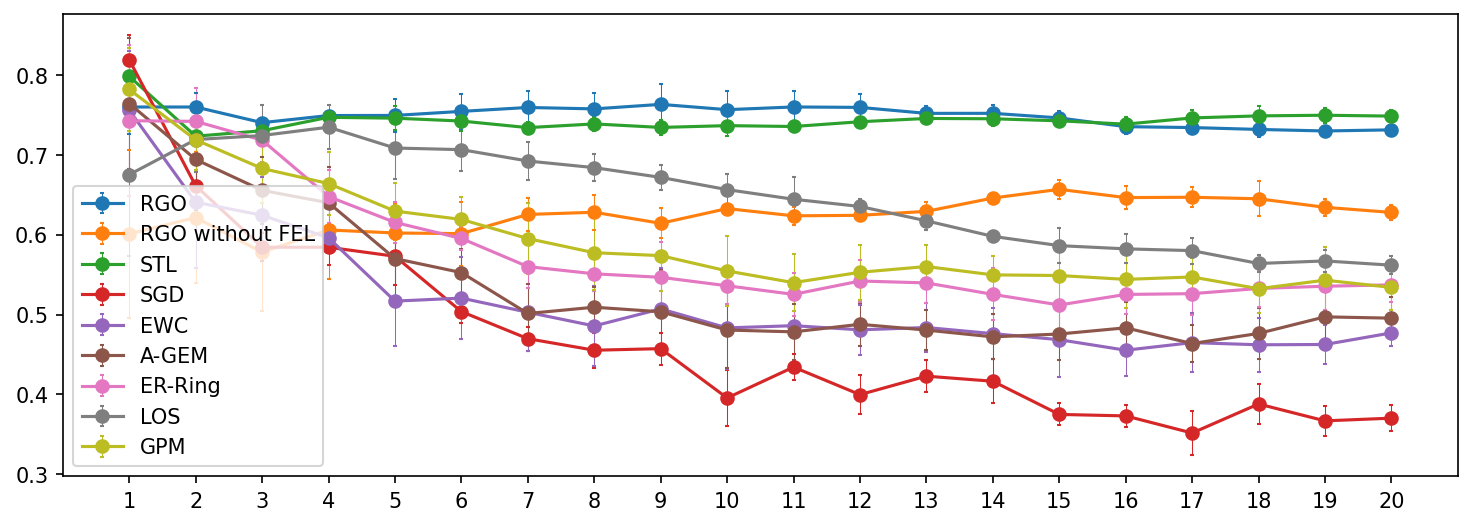}} \quad \subfloat[Split miniImageNet]{\includegraphics[width=0.47\textwidth]{./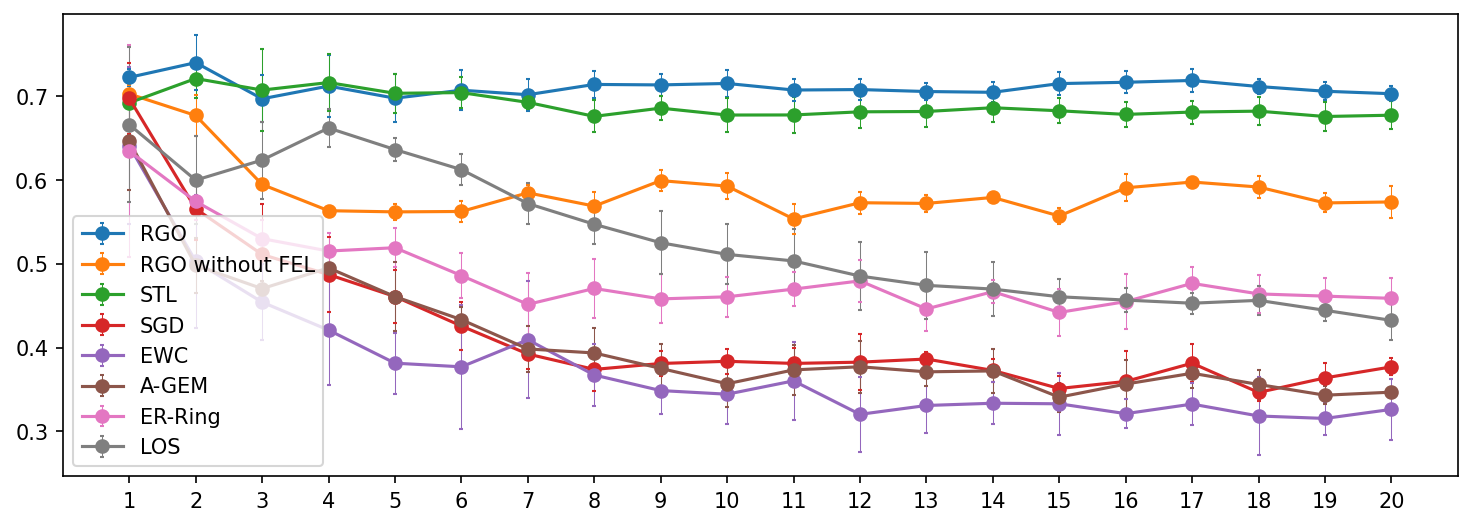}} \\
    \caption{Evolution of average accuracy with the number of tasks during the continual learning process.}
      \label{fig:geo_distribution}
\end{figure}


The evolution of average accuracy is shown in Figure \ref{fig:geo_distribution} and the final results with error bars of the indicated datasets at the end of training are reported in Table \ref{tab:overall}. The proposed method shows a strong performance of average accuracy over the baselines on all benchmarks. The result of BWT shows that RGO can significantly reduce catastrophic forgetting especially on complex tasks and deep architectures. RGO improves upon strongest baseline considerably: 17.0\% and 24.5\% absolute gain in average accuracy, 93.4\% and 94.4\% reduction in forgetting, on CIFAR100 and miniImageNet, respectively. Meanwhile, on rotated MNIST and miniImageNet, we observe that our method even exceeds the performance of STL which is often regarded as the upper bound of continual learning methods. The results of ablation experiments show that RGO maintains good performance without FEL, only slightly lower than LOS which has an additional task orthogonal mapping layer on Rotated MNIST. In Table \ref{tab:exp}, we list some results on modified LeNet from APD (\citep{Yoon2019}) as a comparison with expansion-based methods. Contrary to the forgetting of other methods, RGO shows positive knowledge transfer and exceeds the theoretical upper bound on the testing set under a fixed network capacity.

\begin{table}[h]
    \caption{Average accuracy of 20-task Split-CIFAR100 dataset with modified LeNet. Capacity denotes the percentage of network capacity used with respect to the original network. Relative ACC represents the difference in accuracy between the corresponding method and STL under the same settings. ($^*$) denotes results reported from APD.}
    \label{tab:exp}
    \centering
        \begin{tabular}{lcccccc}
             {\bf Metric}  & {\bf PGN$^*$}  & {\bf DEN$^*$} & {\bf RCL$^*$} &{\bf APD$^*$ }& {\bf RGO }\\ 
            \hline
            relative ACC(\%)  & -10.24$\pm$0.39 & -9.90$\pm$0.77 & -9.01$\pm$0.25 & -4.19$\pm$0.33& \textbf{+6.02$\pm$0.5}\\
            Capacity & 271\% & 191\%& 184\%&130\%& \textbf{100\%}\\
        \end{tabular}
\end{table}


Further, we test our method with more architectures. As shown in Table \ref{tab:archs}, RGO achieves higher test accuracy than STL with only 5\% capacity despite forgetting on the training set. RGO provides more robust features to reduce the accuracy gap between the training set and the testing set by 9\% to 44\%. Meanwhile, we report new state-of-the-art performance of \textbf{82.22\%} and \textbf{72.63\%} on Split-CIFAR100 (20 tasks) and Split-miniImageNet (20 tasks) respectively without a well-designed training schedule. Although RGO minimizes forgetting in each local optimization problem, due to the layer-by-layer accumulation of errors, deeper structures lead to more forgetting. FEL uses random permutation to greatly reduce the coupling between layers, which plays an important role in alleviating this problem. In this perspective, shallow and wide structures may be beneficial to alleviate catastrophic forgetting in the field of continual learning.




\begin{table}[h]
    \caption{Results of different architectures on 20-Split-CIFAR100(*) and 20-Split-miniImageNet($\dagger$). $\Delta$ denotes the difference between training set accuracy and testing set accuracy. All experiments are trained 5 times with 20 epoches. Learning rate is set at 0.03 and 0.01 for CIFAR and miniImageNet respectively.}
    \label{tab:archs}
    \scriptsize
    \centering
        \begin{tabular}{cccccccc}
             &\multicolumn{3}{c}{\bf Single-Task Learning}  &\multicolumn{4}{c}{\bf Recursive Gradient Optimization}               \\
             \hline
             {\bf Architecture} & {\bf ACC$_{train}$(\%)} & {\bf ACC$_{test}$(\%)}  & {\bf $\Delta$(\%)} & {\bf ACC$_{train}$(\%)} & {\bf ACC$_{test}$(\%)}  & {\bf $\Delta$(\%)}  & {\bf BWT(\%)} \\ 
            \hline
            LeNet-5$^*$ & 100.0$\pm$0.00 & 75.01$\pm$0.32 & 25.0$\pm$0.3 & 99.35$\pm$0.05 & 81.03$\pm$0.51  & 18.3$\pm$0.5 & -0.99$\pm$0.15\\
            AlexNet-6$^*$   & 99.90$\pm$0.03  & 81.60$\pm$0.31 & 18.3$\pm$0.3 & 98.78$\pm$0.08 & \textbf{82.22$\pm$0.24}& 16.6$\pm$0.2 & -1.45$\pm$0.13\\
            VGG-11$^*$         & 98.30$\pm$0.55  & 79.51$\pm$0.71 & 18.8$\pm$0.6 & 95.17$\pm$0.08 & 79.81$\pm$0.22  & 15.4$\pm$0.3 & -4.00$\pm$0.19\\
            VGG-13$^*$         & 95.73$\pm$0.83  & 75.64$\pm$0.49 & 20.1$\pm$0.8 & 93.26$\pm$1.22 & 77.43$\pm$0.34  & 15.8$\pm$1.0 & -4.86$\pm$0.94\\
            \hline 
            AlexNet-7$^\dagger$ & 99.72$\pm$0.30  & 71.92$\pm$0.55 & 27.8$\pm$0.6 & 98.10$\pm$0.11 & \textbf{72.63$\pm$0.45}  & 25.5$\pm$0.5 & -1.98$\pm$0.16\\
            VGG-11$^\dagger$    & 99.70$\pm$0.08  & 71.67$\pm$0.13 & 28.0$\pm$0.2 & 95.91$\pm$0.22 & 71.14$\pm$0.62  & 24.8$\pm$0.7 & -3.06$\pm$0.23\\
            VGG-13$^\dagger$    & 97.62$\pm$0.44  & 67.22$\pm$0.76 & 30.4$\pm$0.5 & 92.48$\pm$1.10 & 66.24$\pm$1.04  & 26.2$\pm$0.7 & -4.79$\pm$0.48\\
            ResNet-18$^\dagger$    & 97.04$\pm$0.46  & 68.78$\pm$1.09 & 28.3$\pm$0.8 & 86.79$\pm$0.85 & 71.00$\pm$0.61  & 15.8$\pm$0.3 & -5.20$\pm$0.62\\
        \end{tabular}

\end{table}

\section{Related work}

In this section, we present some discussions between the adopted technology with existing work. The starting point of our approach and many other loss-constrained approaches is to optimize current loss and the estimated past loss at the same time. Most of the commonly used regularization methods \citep{Kirkpatrick2017,Zenke2017,Teng2020} use a hyperparameter to balance the current task and past tasks. The objective functions of these methods can be expressed as $L(\theta) + \lambda F(\theta-\theta_{old})$. This type of method suffers from the trade-off between new tasks and old tasks and requires hyperparameter search to obtain better results. In contrast, we follow the principle of \emph{current-task-first} discarding empirical trade-offs between tasks which means $\lambda =0$. In the solution space of $\theta = \arg \min L$, we change the path of the gradient descent process through the P matrix and find the one that minimizes $F(\theta-\theta_{old})$. Thus, under the assumption of over-parameters, RGO optimizes current performance and forgetting simultaneously.

Although the starting point is different, our gradient modification process is closely related to gradient constraint methods like OWM \citep{Zeng2019a} and GPM \citep{Saha2021}. OWM uses similar iteratively updated projectors derived from \emph{recursively least square(RLS)}, 
which regards each layer as an independent linear classifier and uses the output of the previous layer to build the projection matrix. This leads to layer-by-layer accumulation of errors in modern complex end-to-end network architectures. Using the gradients of the loss function directly, our approach is less worried about the depth of the network and more compatible with existing auto-grad frameworks. For a single-layer linear classifier $y=Wx$, OWM and RGO are equivalent considering that $\frac{\partial y}{\partial W} = x$. In addition, an extra normalization procedure in our method guarantees the learning rate of the current task. This brings an additional advantage that our method can reuse the hyperparameters of original single-task models.
GPM \citep{Saha2021} projects the gradient of each layer into a lower-dimensional residual space of previous tasks, while the parameter space of RGO is consistent for different tasks. RGO will maintain the network's fitting ability as the number of tasks increases. In addition, our method is not limited to the instability of SVD decomposition and does not require hyperparameter selection. 


Task encoding layer has been used to reduce interference between tasks in recent works like LOS \citep{Chaudhry2020} and HAT \citep{Serra2018}, which require additional network capacity. On the contrary, FEL is only a permutation of the input corresponding to the task id. This provides an efficient task encoding and decoupling method which can be easily integrated into other continual learning methods.

\section{Limitation}
\label{sec:limitations}
Like all methods based on gradient constraint, analyses in this paper are based on neighborhood assumption and over-parameterized assumption which may not be satisfied in some narrower networks. When the number of tasks is close to the minimum number of channels in the network, this assumption fails. Although we have empirically proved that this effect is not obvious under common experimental settings, attention should be paid to the width of the network in applications. 

\section{Conclusion}



In this paper, we propose a new recursive gradient optimization method to find the optimal parameters of fixed capacity networks, and a new feature encoding strategy to characterize the structure of the network. The feature encoding layer and the optimizer to minimize forgetting are both compatible with typical learning models, which allows our approach to be a general method to add continual learning capability into the vast majority of the existing network architectures learned by variants of gradient descent, with only constant times of memory/time cost than typical back-propagation algorithms. The theoretical derivation and experimental results show that RGO is currently the optimal approach under the current-task-first principle and quadratic loss estimation for fixed capacity networks. Experiments demonstrate that RGO achieves significantly better performance than other state-of-the-art methods on a variety of benchmarks. Without restrictions on the network structure and loss form, RGO has broad prospects in combination with other continuous learning methods and applications in other representation learning fields.

\section*{Reproducibility Statement}
We give the reproducible source code in the supplementary materials, and introduce the implementation of the baseline method in Appendix \ref{subsec:baseline}. See Appendix \ref{sec:experiment} for the selection of hyperparameters. In \verb+Python3.6+ and \verb+TensorFlow1.4+, all results can be reproduced. The theorems put forward in the main text have corresponding proofs in Appendix \ref{sec:proofs}.

\bibliography{ContinualLearning}
\bibliographystyle{iclr2022_conference}

\appendix
\appendix 

\section*{Appendix}
Section \ref{sec:proofs} provides proofs of theorems introduced in the paper. Section \ref{sec:implementation details} provides some details and approximations used in the implementation. Section \ref{sec:experiment} provides the details of codes, architectures, hyperparameters and resources used in the experiment.

\section{Proofs}
\label{sec:proofs}
\subsection{Proof of Theorem \ref{learningrate} } 
\label{apdx:learningrate}

As we have mentioned, the distribution of elements of the gradients of future tasks are assumed to be isotropic, and this assumption is guaranteed by our random encoding strategy under different tasks. For isotropic distribution, we have:

\begin{lemma}[Distribution Consistency]
    \label{lem:iso}
    Isotropism is invariant under orthogonal transformation. \citep{larsen2005introduction}
\end{lemma}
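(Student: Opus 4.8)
The plan is to unwind the definition of isotropy and exploit the group structure of the orthogonal group $O(n)$. Recall that a random vector $X$ with values in $\R^n$ is \emph{isotropic} (spherically symmetric) when $RX$ has the same distribution as $X$ for every orthogonal matrix $R$; equivalently, when $X$ admits a density, that density depends on $x$ only through $\|x\|$. First I would fix which of these characterizations is in force and remark that they coincide, since the lemma is invoked in Appendix~\ref{apdx:learningrate} both at the level of distributions and at the level of densities of back-propagated gradient coordinates.

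Next I would take an isotropic $X$ and an arbitrary orthogonal $Q$, and check that $Y := QX$ is again isotropic. For any orthogonal $R$, the product $RQ$ is orthogonal, so
\[
RY = (RQ)X \overset{d}{=} X \overset{d}{=} QX = Y,
\]
where the first equality in distribution applies isotropy of $X$ to $RQ$ and the second applies it to $Q$. Since $R$ was arbitrary, $Y$ is isotropic. This is the entire content at the distributional level, and it is essentially the observation that $O(n)$ is closed under composition.

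If the lemma is needed at the density level instead, I would argue by change of variables: if $p_X(x) = \varphi(\|x\|)$, then $p_{QX}(y) = p_X(Q^{-1}y)\,|\det Q^{-1}| = \varphi(\|Q^{-1}y\|) = \varphi(\|y\|)$, using $|\det Q| = 1$ and that orthogonal maps preserve the Euclidean norm; hence the transformed density is again radial. As a byproduct, second-moment normalizations are preserved too: $\Cov(QX) = Q\,\Cov(X)\,Q^{\top}$, so if $\Cov(X) = cI$ then $\Cov(QX) = cI$ as well.

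The argument is short and I do not expect a genuine obstacle; the only thing to get right is bookkeeping about the definition, namely making sure that whichever notion of ``isotropic'' is used downstream in Theorem~\ref{learningrate} (invariance in distribution under $O(n)$, radial density, or white covariance) is exactly the one shown to be preserved. I would therefore state the lemma with the distributional definition, list the density and covariance statements as immediate consequences, and cite \citet{larsen2005introduction} for the standard facts on spherically symmetric distributions.
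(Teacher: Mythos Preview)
Your argument is correct: the closure of $O(n)$ under composition immediately gives that $QX$ is again invariant under every orthogonal rotation, and the density and covariance remarks are accurate corollaries. Note, however, that the paper does not supply its own proof of this lemma; it is stated with a textbook citation \citep{larsen2005introduction} and then invoked in the computation $\E_x[x^T V^T \Lambda_P V x] = \E_x[x^T \Lambda_P x]$, which uses precisely the distributional form you prove. So there is no alternative approach in the paper to compare against---your write-up simply fills in what the citation is meant to cover, and it does so in the form actually needed downstream.
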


Then, we introduce a lemma on trace of matrix which is often used in matrix analysis \citep{horn2012matrix}: 

\begin{lemma}[Trace Consistency]
    \label{lem:trace}
    Trace of matrix is consistent under orthogonal transformation. 
\end{lemma}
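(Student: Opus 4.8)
The plan is to reduce the claim to the cyclic invariance of the trace. Writing $Q$ for the orthogonal matrix in question, so that $Q^{T}Q = QQ^{T} = I$, and $A$ for an arbitrary square matrix of the matching dimension, the statement ``trace is consistent under orthogonal transformation'' means precisely that the conjugated matrix $Q^{T}AQ$ (equivalently $Q^{-1}AQ$) has the same trace as $A$, i.e. $\Tr(Q^{T}AQ) = \Tr(A)$. This is the form in which the lemma will be used: it guarantees that the trace-normalization constraint $\text{trace}(P) = \text{dim}(P)$ is unchanged when the projection matrix is expressed in a permuted feature basis.

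First I would record the elementary identity $\Tr(BC) = \Tr(CB)$, valid whenever both products are square; it follows immediately from $\Tr(BC) = \sum_{i}\sum_{j} B_{ij}C_{ji} = \sum_{j}\sum_{i} C_{ji}B_{ij} = \Tr(CB)$, since the double sum may be reordered freely. Next I would apply this with $B = Q^{T}A$ and $C = Q$, which gives $\Tr(Q^{T}AQ) = \Tr\!\big(Q(Q^{T}A)\big) = \Tr\!\big((QQ^{T})A\big) = \Tr(A)$, where the final equality uses the orthogonality relation $QQ^{T} = I$. That completes the argument.

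There is essentially no obstacle here; the only points requiring a little care are to state precisely which transformation is meant (conjugation $A \mapsto Q^{T}AQ$, not a one-sided multiplication) and to observe that the proof uses nothing beyond $Q$ being invertible with $Q^{-1} = Q^{T}$. In particular it applies verbatim to every permutation matrix $S_{l}(k)$ introduced by the Feature Encoding Layer, which is the case actually needed in combination with Lemma~\ref{lem:iso} to establish Theorem~\ref{learningrate}.
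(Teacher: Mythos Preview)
Your proof is correct: the cyclic identity $\Tr(BC)=\Tr(CB)$ applied with $B=Q^{T}A$ and $C=Q$ is the standard argument, and nothing more is needed. The paper, however, does not actually prove this lemma; it merely states it with a reference to a matrix-analysis textbook and then uses it. So there is no proof in the paper to compare against---yours is the expected textbook route.

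One minor contextual correction: in the paper the lemma is not invoked for the FEL permutation matrices $S_{l}(k)$ but for the orthogonal matrix $V$ that diagonalizes $P$ (so that $VPV^{T}=\Lambda$), in order to conclude $\text{trace}(P)=\sum_{i}\lambda_{i}$. Lemma~\ref{lem:iso} is what handles the invariance of the gradient distribution under $V$; Lemma~\ref{lem:trace} is used separately to identify $\text{trace}(P)$ with the eigenvalue sum. Your argument of course covers both situations, since it applies to any orthogonal $Q$.
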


As described in Section \ref{sec:optimizer}, at $i$-th single train step during task $k$, we have:

\begin{equation}
    \left\{
    \begin{aligned}
&\theta_{i} = \theta_{i-1} -\eta_{i} P \nabla L_{k}(\theta_{i-1}) \\
&L_{k}(\theta_i) = L_k(\theta_{i-1}) -\eta_{i}(\nabla L_{k}(\theta_{i-1}))^T P\nabla L_{k}(\theta_{i-1})
    \end{aligned}
    \right.
   \end{equation}

   For any potential gradient, the mathematical expectation of the loss function decline is:

\begin{equation}
    \mathbb{E}[L_{k}(\theta_i)-L_k(\theta_{i-1})]=-\eta \mathbb{E}[(\nabla L_{k})^T P \nabla L_{k}]
\end{equation}

the optimizer degrades to original single task optimizer when $P$ equals to identify matrix. As positive definite symmetric matrix can be orthogonal diagonalized by a orthogonal matrix $V$,

\begin{equation}
    VPV^T = diag\{\lambda_1,\lambda_2,\cdots,\lambda_n\} :=\Lambda
\end{equation}

where $\{\lambda_i\}$ represent the eigen values of the Projection Matrix $P$. Mark $dim(P)$ and $\nabla L_{k}(\theta)$ with $n$ and $x=(x_1,x_2,\cdots,x_n)$ respectively. 
We assume the distribution of unknown future gradients is isotropic and apply Lemma \ref{lem:iso} to the expectation:

\begin{equation}
    \begin{aligned}
    &\mathbb{E}[(\nabla L_{k})^T P \nabla L_{k}] =\mathbb{E}_{x}[x^T V^T \Lambda_P V x] =\mathbb{E}_{x}[x^T \Lambda_P x]\\
    &=\sum_{i=1}^n \lambda_i \mathbb{E}_{x}[x_i^2]=\sum_{i=1}^n \lambda_i \frac{ \mathbb{E}_{x}[\sum_{i=1}^n x_i^2]}{n}\\
    &= \frac{\sum_{i=1}^n \lambda_i}{n} \mathbb{E}_{ x}[x^T x] = \frac{\sum_{i=1}^n \lambda_i}{n} \mathbb{E}[(\nabla L_{k})^T \nabla L_{k}]
    \end{aligned}
\end{equation}

According to Lemma \ref{lem:trace}, the sum of the eigen values of Inverse Hessian Matrix $P$ equals to that of $\Lambda$, that means:

\begin{equation}
    \sum_{i=1}^n \lambda_i = trace(P)
\end{equation}

Thus if we add a normalize a constraint ${trace(P)=dim(P)}$ to the Inverse Hessian matrix $P$, every task in the continual learning procedure can have consistent expectation convergence rate.


\subsection{Proof of loss function equivalence}
\label{apdx:RLL}

\begin{theorem}[Loss Equivalence]
    If $\theta_{k-1}^* = \arg\min_\theta F_{k-1}^{RLL}(\theta) $, using $F_{k}^{RLL}$ or $F_{k}$ as the loss to optimize is equivalent, which means $\arg\min_\theta F_{k}^{RLL}(\theta)=\arg\min_\theta F_{k}(\theta)$
\end{theorem}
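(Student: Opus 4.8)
The plan is to exploit that both $F_k^{RLL}$ and the quadratic estimate of $F_k$ in \eqref{F} are quadratic functionals of $\theta$ sharing the \emph{same} quadratic part $\bar{H} := \sum_{j=1}^{k-1} H_j$, so that comparing their minimizer sets collapses to a single linear identity. First I would record that $\bar H$, a sum of Hessians evaluated at minima, is positive semidefinite; consequently $\argmin_\theta F_k^{RLL}(\theta) = \theta_{k-1}^* + \ker \bar H$ (the minimizers of a centered positive semidefinite quadratic are the center plus the kernel), while the minimizers of $F_k$ are exactly the solutions of the normal equations $\bar H \theta = \sum_{j=1}^{k-1} H_j \theta_j^*$ (a consistent system, since $F_k$ is bounded below), a set of the form $\theta^{\star\star} + \ker\bar H$. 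These two affine sets coincide precisely when $\theta_{k-1}^*$ itself solves the normal equations, i.e.
\[
  \sum_{j=1}^{k-1} H_j\,\bigl(\theta_{k-1}^* - \theta_j^*\bigr) = 0 .
\]
If one additionally assumes $\bar H$ is positive definite, both argmins are single points and this becomes a literal equality of points, with the same reduction.

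Next I would split off the top summand. Since $\theta_{k-1}^* - \theta_{k-1}^* = 0$, the $j=k-1$ term vanishes, so it remains to prove $\sum_{j=1}^{k-2} H_j(\theta_{k-1}^* - \theta_j^*) = 0$. But this left-hand side is exactly $\nabla F_{k-1}(\theta_{k-1}^*)$ for the quadratic estimate $F_{k-1}(\theta) \approx \sum_{j=1}^{k-2}\bigl[L_j(\theta_j^*) + \tfrac{1}{2}(\theta-\theta_j^*)^T H_j (\theta-\theta_j^*)\bigr]$, so the claim is equivalent to the statement that $\theta_{k-1}^*$ minimizes $F_{k-1}$.

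This is where the hypothesis and an induction on $k$ enter. By assumption $\theta_{k-1}^* = \argmin_\theta F_{k-1}^{RLL}(\theta)$, and applying the theorem at stage $k-1$ — whose hypothesis, $\theta_{k-2}^* = \argmin F_{k-2}^{RLL}$, is the next link of the ``all earlier tasks fully trained'' chain referenced in the main text — yields $\argmin F_{k-1}^{RLL} = \argmin F_{k-1}$. Hence $\theta_{k-1}^*$ minimizes $F_{k-1}$ and $\nabla F_{k-1}(\theta_{k-1}^*) = 0$, which closes the induction. The base case $k=2$ is immediate and unconditional: $F_2^{RLL}(\theta)$ and the quadratic estimate of $F_2(\theta)=L_1(\theta)$ are both $\tfrac{1}{2}(\theta-\theta_1^*)^T H_1 (\theta-\theta_1^*)$ up to an additive constant, so their minimizers coincide.

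I expect the main obstacle to be bookkeeping around the induction rather than any nontrivial computation: the theorem as phrased hypothesizes optimality of $\theta_{k-1}^*$ only for $F_{k-1}^{RLL}$, so to invoke it recursively one must either adopt the cleaner standing assumption that every $\theta_j^*$ is an exact minimizer of $F_j^{RLL}$ (the ``fully-trained'' condition) or thread the chain of hypotheses explicitly down to $k=2$. A secondary point requiring care is the definiteness of $\bar H$: if some $H_j$ are merely positive semidefinite, the minimizer sets are affine subspaces rather than points and the equivalence must be stated as equality of subspaces, which the displayed identity still certifies via $\bar H(\theta_{k-1}^* - \theta^{\star\star}) = 0$.
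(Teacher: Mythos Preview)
Your proposal is correct and follows essentially the same route as the paper: both reduce the equivalence to showing that the linear cross-term $\sum_{j=1}^{k-2} H_j(\theta_{k-1}^*-\theta_j^*)$ vanishes, and both establish this by induction under the standing ``fully-trained'' assumption $\nabla F_j^{RLL}(\theta_j^*)=0$ for all $j<k$. The only cosmetic difference is packaging---the paper isolates the induction as a telescoping lemma ($D(k)-D(k-1)=\bar H_{k-1}(\theta_k^*-\theta_{k-1}^*)$) rather than recognizing the sum as $\nabla F_{k-1}(\theta_{k-1}^*)$---and your explicit handling of the semidefinite case is more careful than the paper's.
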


We first expand $F_k$ in Equation \ref{F} at the initial state ${\theta_{k-1}^*}$ during $\mathcal{T}_k$:

\begin{equation}
    \begin{aligned}
    &F_k(\theta) \approx \sum_{j=1}^{k-1}[L_j(\theta_j^*)+ \frac{1}{2}(\theta-\theta_{k-1}^*+\theta_{k-1}^* -\theta_{j}^*)^T H_j (\theta-\theta_{k-1}^*+\theta_{k-1}^* -\theta_{j}^*)] \\
    &= (\theta-\theta_{k-1}^*)^T\sum_{j=1}^{k-2}[H_j (\theta_{k-1}^* -\theta_{j}^*)]+ \frac{1}{2}(\theta-\theta_{k-1}^*)^T (\sum_{j=1}^{k-1}H_j)(\theta-\theta_{k-1}^*) + const.
    \end{aligned}
\end{equation}

For further analysis, we first introduce the following lemma which can be proved inductively.

\begin{lemma}
    \label{lemma1}
    if $(\sum_{j=1}^{k-1}H_j)(\theta_k^*-\theta_{k-1}^*) = 0$ holds for any $k$, then $\sum_{j=1}^{k-1}[H_j(\theta_k^*-\theta_j^*)] = 0$ also holds for any $k$.
\end{lemma}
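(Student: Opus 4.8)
The plan is to prove the lemma by a straightforward induction on $k$. It is convenient to abbreviate $\Sigma_m := \sum_{j=1}^{m-1} H_j$ for the running sum of Hessians (so $\Sigma_1 = 0$ and $\Sigma_{m} = \Sigma_{m-1} + H_{m-1}$), and to set $T_k := \sum_{j=1}^{k-1} H_j(\theta_k^* - \theta_j^*)$ for the quantity we must show vanishes. In this notation the hypothesis of the lemma is simply $\Sigma_k(\theta_k^* - \theta_{k-1}^*) = 0$ for every $k$, and the goal is $T_k = 0$ for every $k$.

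For the base case, $T_1$ is an empty sum and hence zero, while $T_2 = H_1(\theta_2^* - \theta_1^*) = \Sigma_2(\theta_2^* - \theta_1^*) = 0$ follows directly from the hypothesis. For the inductive step I would assume $T_{k-1} = \sum_{j=1}^{k-2} H_j(\theta_{k-1}^* - \theta_j^*) = 0$ and then rewrite $T_k$ by peeling off the $j=k-1$ term and, inside the remaining sum, inserting and subtracting $\theta_{k-1}^*$:
\begin{equation}
T_k = H_{k-1}(\theta_k^* - \theta_{k-1}^*) + \Bigl(\textstyle\sum_{j=1}^{k-2} H_j\Bigr)(\theta_k^* - \theta_{k-1}^*) + \sum_{j=1}^{k-2} H_j(\theta_{k-1}^* - \theta_j^*).
\end{equation}
The third term is exactly $T_{k-1}$, which is zero by the inductive hypothesis; the first two terms combine to $\bigl(H_{k-1} + \Sigma_{k-1}\bigr)(\theta_k^* - \theta_{k-1}^*) = \Sigma_k(\theta_k^* - \theta_{k-1}^*)$, which is zero by the lemma's hypothesis. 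Hence $T_k = 0$, closing the induction.

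The argument is entirely elementary, so I do not expect a genuine obstacle; the only thing that needs care is the index bookkeeping in the regrouping step — correctly peeling off the $j=k-1$ summand, inserting $\theta_{k-1}^*$ in the right place, and matching $\Sigma_{k-1} + H_{k-1} = \Sigma_k$. The one structural point worth emphasizing is that the combined term is killed by the hypothesis at level $k$ while the residual sum is killed by the inductive hypothesis at level $k-1$, so the induction must be set up over all $k$ at once rather than trying to derive a single instance in isolation.
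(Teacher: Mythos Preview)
Your proof is correct and is essentially the same as the paper's: the paper sets $D(k) := \sum_{j=1}^{k-1} H_j(\theta_k^* - \theta_j^*)$, computes $D(k) - D(k-1) = (\sum_{j=1}^{k-1} H_j)(\theta_k^* - \theta_{k-1}^*) = 0$ by exactly your peel-and-regroup manipulation, and concludes from $D(0)=0$. The only cosmetic difference is that the paper packages the induction as a telescoping identity rather than stating the inductive hypothesis explicitly.
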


\begin{proof}
    We mark $\sum_{j=1}^{k-1}[H_j(\theta_k^*-\theta_j^*)] $ as $D(k)$, then for $k = 1,2,3,\cdots$ we have 

    \begin{equation*}
        \begin{aligned}
        &D(k) - D(k-1)  \\ 
        &= \sum_{j=1}^{k-1}[H_j(\theta_k^*-\theta_j^*)] - \sum_{j=1}^{k-2}[H_j(\theta_{k-1}^*-\theta_j^*)] \\
        &= H_{k-1}(\theta_k^*-\theta_{k-1}^*)  + (\sum_{j=1}^{k-2}H_j)(\theta_k^*-\theta_{k-1}^*)\\
        &= (\sum_{j=1}^{k-1}H_j)(\theta_k^*-\theta_{k-1}^*)\\
        &= 0
        \end{aligned}
    \end{equation*}

    Using the fact that $D(0) = 0$ , $D(k) = 0$ holds for all positive integer $k$.
\end{proof}

Ignoring the constant terms, the key to prove this theorem is the second term of the expression of $F_k$;

\begin{proof}
    If ${\nabla F_k^{RLL}(\theta_k^*)=0}$ satisfies for all $k\in [K]$, then we can get
    \[
        (\sum_{j=1}^{k-1}H_j)(\theta_k^*-\theta_{k-1}^*) = 0 \ , \forall k\in [K]
    \]
    Using the result of Lemma \ref{lemma1}, we have 
    \[
        \sum_{j=1}^{k-2}[H_j (\theta_{k-1}^* -\theta_{j}^*)]=0 \ , \forall k\in [K]
    \]
    Substituting this formula into Equation (\ref{F}) and discarding the constant terms lead to
    \[
        F_k(\theta_k^*) = F_k^{RLL}(\theta_k^*) \ , \forall k\in [K]
    \]
    Thus, $\{F_k^{RLL}\}_{k=1}^K$ and $\{F_k\}_{k=1}^K$ are equivalent loss sequences throughout the continual learning process.
\end{proof}

Note that this equivalence is obtained in the case of a second-order approximation of the loss functions of tasks, so it also depends on the "close vicinity" hypothesis.

\subsection{Proof of Theorem \ref{upperbound}}
\label{apdx:upperbound}
\begin{lemma}[Cauchy Inequality]
    \label{lem:ineq}
    For any positive integer $n$, positive symmetric definite matrix $P$ and vectors $\{v_i\}_{i=1}^n$, denote the $P$-norm of $v$ by $||v||_P = \sqrt{v^TPv}$, we have: 
    \[
        ||\sum_{i=1}^n v_i||_P^2 \leq n \sum_{i=1}^n ||v_i||_P^2 \]
\end{lemma}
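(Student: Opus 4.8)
The plan is to reduce this to the familiar Euclidean Cauchy--Schwarz / power-mean inequality. Since $P$ is symmetric positive definite, it admits a symmetric positive definite square root $P^{1/2}$ (equivalently a factorization $P = Q^T Q$ via Cholesky), so that $\|v\|_P = \|P^{1/2}v\|_2$ where $\|\cdot\|_2$ is the ordinary Euclidean norm. Under this substitution the claimed inequality becomes $\|\sum_{i=1}^n P^{1/2}v_i\|_2^2 \le n\sum_{i=1}^n\|P^{1/2}v_i\|_2^2$, i.e. the statement for the Euclidean norm applied to the vectors $w_i := P^{1/2}v_i$.

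For the Euclidean case I would first invoke the triangle inequality $\|\sum_i w_i\|_2 \le \sum_i \|w_i\|_2$, then square and apply the Cauchy--Schwarz inequality to the vectors $(\|w_1\|_2,\dots,\|w_n\|_2)$ and $(1,\dots,1)$, giving $\big(\sum_i \|w_i\|_2\big)^2 \le n\sum_i\|w_i\|_2^2$. Chaining the two bounds yields the result.

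Alternatively — and perhaps more self-contained, avoiding square roots — I would expand the quadratic form directly:
\[
\Big\|\sum_{i=1}^n v_i\Big\|_P^2 = \sum_{i=1}^n\sum_{j=1}^n v_i^T P v_j,
\]
and bound each cross term using positive semidefiniteness of $P$: since $(v_i-v_j)^T P (v_i-v_j)\ge 0$, we get $2\,v_i^T P v_j \le v_i^T P v_i + v_j^T P v_j = \|v_i\|_P^2 + \|v_j\|_P^2$. Summing this bound over all $i,j$ gives $\sum_{i,j} v_i^T P v_j \le \tfrac12\sum_{i,j}\big(\|v_i\|_P^2+\|v_j\|_P^2\big) = n\sum_{i=1}^n\|v_i\|_P^2$, which is exactly the claim.

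There is essentially no hard step here; the only point worth a sentence is that positive definiteness of $P$ is what makes $\|\cdot\|_P$ a genuine norm (so the triangle inequality is available), while the inequality itself only uses positive semidefiniteness. I would present the direct quadratic-form argument as the main proof since it is shortest and needs no factorization lemma.
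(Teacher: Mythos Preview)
Your proof is correct; both approaches you outline are valid, and the direct quadratic-form argument is indeed the cleanest. Note, however, that the paper does not actually supply a proof of this lemma: it is stated as a named result (``Cauchy Inequality'') and then invoked without justification in the proof of Theorem~\ref{upperbound}. So there is no paper proof to compare against --- your write-up would be strictly more detailed than what the paper contains.
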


    Denote $\hat{\sigma}_{m}(\cdot)$ as the symbol for finding maximum eigenvalue and $\eta_m$ as the maximum single-step learning rate,
     the recursive least loss has an upper bound:

\begin{equation}
    \begin{aligned}
    &F_k^{RLL}(\theta_k^*) =\frac{1}{2} (\sum_{i=1}^{n_k} \eta_{i}\nabla L_{k}(\theta_{i-1}))^T P(\sum_{j=1}^{k-1}H_j)P (\sum_{i=1}^{n_k} \eta_{i}\nabla L_{k}(\theta_{i-1})) \\
    &\leq \hat{\sigma}_{m}[P(\sum_{j=1}^{k-1}H_j)]\frac{1}{2} (\sum_{i=1}^{n_k} \eta_{i}\nabla L_{k}(\theta_{i-1}))^T P (\sum_{i=1}^{n_k} \eta_{i}\nabla L_{k}(\theta_{i-1})) \\
\end{aligned}
\end{equation}

According to Lemma \ref{lem:ineq}, we have

\begin{equation}
    \label{equ:app1}
    \begin{aligned}
    &F_k^{RLL}(\theta_k^*) \leq \hat{\sigma}_{m}[P(\sum_{j=1}^{k-1}H_j)]\frac{1}{2}n_k \sum_{i=1}^{n_k} (\eta_{i}\nabla L_{k}(\theta_{i-1}))^T P (\eta_{i}\nabla L_{k}(\theta_{i-1})) \\
    &\leq \frac{1}{2}n_k\eta_m\hat{\sigma}_{m}[P(\sum_{j=1}^{k-1}H_j)] \sum_{i=1}^{n_k} \eta_{i}(\nabla L_{k}(\theta_{i-1}))^T P \nabla L_{k}(\theta_{i-1}) \\
    \end{aligned}
\end{equation}

Considering that $L_{k}(\theta_{k}^*)\geq 0$ at the end of training and loss for current task before training can be expressed as $L_{k}(\theta_{k-1}^*)$, the change of loss function described in Equation \ref{equ:totalupdate} satisfies the following inequality:

    \begin{equation}
        \label{equ:app2}
        L_{k}(\theta_{k-1}^*) \geq \sum_{i=1}^{n_k} \eta_{i}(\nabla L_{k}(\theta_{i-1}))^T P\nabla L_{k}(\theta_{i-1})
    \end{equation}

Then we can complete the proof by combining \ref{equ:app1} with \ref{equ:app2}.

\begin{equation}
    F_k^{RLL}(\theta_k^*)\leq \frac{1}{2}n_k\eta_m\hat{\sigma}_{m}[P(\sum_{j=1}^{k-1}H_j)]L_k(\theta_{k-1}^*)\\
\end{equation}

\subsection{Solution of Problem \ref{problem:optimization}}
\label{apdx:solution}

At a fixed point of the model, total Hessian matrix $\bar{H}:=\sum_{j=1}^{k-1}H_j$ must be positive definite like $P$. As positive definite symmetric matrix can be orthogonal diagonalized by an orthogonal matrix \citep{horn2012matrix}, 
 $H$ and $P$ can be expressed as $\bar{H}=U^T \Lambda_{\bar{H}} U$, $P=V^T \Lambda_P V$, 
 while ${\Lambda_{\bar{H}}=\{\sigma_1,\cdots,\sigma_n\}}$ and $\Lambda_P=\{\lambda_1,\cdots,\lambda_n\}$ are diagonal matrices.




Using the diagonalization step, the optimization problem \ref{problem:optimization} can be expressed as:

\begin{equation}
    P: \left\{
    \begin{aligned}
    &\mathop {\min }\limits_P \quad \hat{\sigma}_{m}(V^T \Lambda_P V U^T \Lambda_{\bar{H}} U)  \\
    &\mathop {s.t.}\quad trace(P)=dim(P)
    \end{aligned}
    \right.
\end{equation}

As orthogonal transformation maintains eigen value \citep{horn2012matrix}, which means:

\[
    \hat{\sigma}_{m}(V^T \Lambda_P V U^T \Lambda_{\bar{H}} U) = \hat{\sigma}_{m}( UV^T \Lambda_P V U^T \Lambda_{\bar{H}}) \]

There are two main variable to be optimized, diagonal matrix $\Lambda_P$ and orthogonal matrix $UV^T$. To simplify the derivation, we set $UV^T$ to the simplest orthogonal matrix $I$. Under this assumption, the optimization problem is simplified as:

\begin{equation}
    \{\lambda_i\}_{i=1}^n: \left\{
    \begin{aligned}
    &\mathop {\min }\limits_{\lambda_i} \quad \max_i \sigma_i \lambda_i  \\
    &\mathop {s.t.}\quad \sum_{i=1}^n\lambda_i=n
    \end{aligned}
    \right.
\end{equation}

We get the optimal eigen values:

\begin{equation}
    \lambda_i=\frac{n}{\sigma_i\sum\frac{1}{\sigma_i}}  
\end{equation}

Thus,

\begin{equation}
    P=V\Lambda_PV^T 
    = \frac{n}{\sum\frac{1}{\sigma_i}} V \Lambda_{\bar{H}}^{-1} V^T 
    =\frac{dim(\bar{H})}{trace(\bar{H}^{-1})}\bar{H}^{-1}
\end{equation}

\subsection{Proof of Theorem \ref{localequivalence}}

Denote the gradient of total parameter set and the total Hessian matrix as $g_\theta=(g_1^T,g_2^T,\cdots,g_L^T)^T$ and $\{\frac{\partial ^2 L_k}{\partial \theta^2}\}_{l,r} = \frac{\partial ^2 L_k}{\partial h_l \partial h_r}$, we have:

\label{apdx:localequivalence}
\begin{equation}
    \begin{aligned}
    g_l^TP_l \frac{\partial ^2 L_k}{\partial h_l \partial h_r}P_r g_r &= g_l^T V_l^T \Lambda_{P_l} V_l  U_l^T \Lambda_{\bar{H}_l}^{\frac{1}{2}} \Lambda_{\bar{H}_r}^{\frac{1}{2}} U_r  V_r^T \Lambda_{P_r} V_r  g_r \\ 
    &= g_l^T V_l^T   \Lambda_{P_l}^{\frac{1}{2}}\Lambda_{P_l} ^{\frac{1}{2}}\Lambda_{\bar{H}_l}^{\frac{1}{2}} \Lambda_{\bar{H}_r}^{\frac{1}{2}}\Lambda_{P_r} ^{\frac{1}{2}} \Lambda_{P_r} ^{\frac{1}{2}}  V_r  g_r \\ 
    &\leq \hat{\sigma}_m(\Lambda_{P_l} ^{\frac{1}{2}}\Lambda_{\bar{H}_l}^{\frac{1}{2}} ) \hat{\sigma}_m(\Lambda_{\bar{H}_r}^{\frac{1}{2}}\Lambda_{P_r} ^{\frac{1}{2}}) g_l^T V_l^T   \Lambda_{P_l}^{\frac{1}{2}} \cdot  \Lambda_{P_r} ^{\frac{1}{2}}  V_r  g_r 
\end{aligned}
\end{equation}

\begin{equation}
    \begin{aligned}
    g_\theta^T \frac{\partial ^2 L_k}{\partial \theta^2} g_\theta&= \sum_{1\leq l,r\leq L}g_l^TP_l \frac{\partial ^2 L_k}{\partial h_l \partial h_r}P_r g_r \\
    &\leq \sum_{1\leq l,r\leq L} \hat{\sigma}_m(\Lambda_{P_l} ^{\frac{1}{2}}\Lambda_{\bar{H}_l}^{\frac{1}{2}} ) \hat{\sigma}_m(\Lambda_{\bar{H}_r}^{\frac{1}{2}}\Lambda_{P_r} ^{\frac{1}{2}}) g_l^T V_l^T   \Lambda_{P_l}^{\frac{1}{2}} \cdot  \Lambda_{P_r} ^{\frac{1}{2}}  V_r  g_r \\
    &\leq \text{max}_{l} [\hat{\sigma}_m(\Lambda_{P_l} \Lambda_{\bar{H}_l} )] \sum_{1\leq l,r\leq L}  g_l^T V_l^T   \Lambda_{P_l}^{\frac{1}{2}} \cdot  \Lambda_{P_r} ^{\frac{1}{2}}  V_r  g_r \\
    &= \text{max}_{l} [\hat{\sigma}_m(\Lambda_{P_l} \Lambda_{\bar{H}_l} )]  g_\theta^T P g_\theta
\end{aligned}
\end{equation}

The optimization problem above has same form as the global optimization problem in Section \ref{apdx:solution}, the solution 
can be easily got as:

\[
    P_l=\frac{dim(\bar{H_l})}{trace(\bar{H_l}^{-1})}\bar{H_l}^{-1} ,where \quad \bar{H_l}=\sum_{j=1}^{k-1} H_{j,l}
\]

\section{Implementation details}
\label{sec:implementation details}
\subsection{Details of quadratic estimation of the Hessian matrix}
\label{apdx:quadratic}

For the $C$-class classification problems, $f(x;\theta)$ has $C$-logits associated to different classes. We consider the most commonly used softmax cross entropy loss which is defined as
\begin{equation}
    l(y,f(x;\theta)) = -\sum_{j=1}^c y_j log(a_j)
\end{equation}
where $a_j= exp(f_j(x;\theta))/\sum_{c=1}^C exp(f_c(x;\theta))$ as the $j$-th softmax output. The $(i,j)$-th element of the second derivative matrix of the loss function with respect to $f(x;\theta)$ is then calculated as
\begin{equation}
    l''(y;f(x,\theta))_{i,j} = a_j \phi_{i,j} - a_ia_j
\end{equation}
where $\phi_{i,j}$ is Dirac function equal to $1$ while $i=j$ else $0$.

As a symmetric diagonally dominant matrix, $l''(y;f(x,\theta))$ has its matrix root $[l''(y;f(x,\theta))]^{\frac{1}{2}}$. This guarantees the correctness of our algorithm. In implementation, for convenience, we only used the diagonal element corresponding to the ground truth label for an estimation.

\subsection{Time \& memory complexity of RGO}

We list the shape of projection matrix and time complexity of projection matrix update and gradient modification introduced by RGO for some typical feature extractors below:

\label{apdx:complexity}
\begin{table}[ht]
    \centering
    \begin{tabular}{lrrrrrr}
    {\bf Kind} &{\bf Shape }& {\bf Size of P}  &{\bf time complexity} &\\
    \hline
    vector & $n_1$  &  $(n_1,n_1)$ & $n_1^2$ &  \\
    matrix & $(n_1,n_2)$  & $ (n_1,n_1) $ & $n_1^2$ & \\
    kernel & $(n_1,n_2,ksize,ksize)$ & $ (ksize^2n_1,ksize^2n_1) $ &$ksize^4n_1^2$ & \\
    \end{tabular}
    \label{tab:complexity}
\end{table}

First, according to Algorithm \ref{alg:algorithm}, the time complexity of updating $P$ is obviously O($\text{dim}(P)^2$). The main concern comes from the matrix-matrix product in $g'=Pg$ for $g$s with higher dimension. However, if we notice the linear correlation of the columns of $g$, we can avoid this matrix multiplication. Use a fully connected layer $y= xW+b: x\in \mathbb{R}^{n_1}, y\in \mathbb{R}^{n_2}$ as an example. Considering $\frac{\partial L}{\partial W} = \frac{\partial L}{\partial y} x^T$, we have $g' = Pg = P\frac{\partial L}{\partial y} x^T$. If we calculate from left to right instead of calculating $g$ first, we can avoid matrix multiplication and reduce the number of calculations from $n_1n_2+n_1^2n_2$ to $n_1n_2+n_1^2$. The amount of calculation beyond the original backpropagation is only $n_1^2$. The calculation process for kernels is the same except for a reshape process.

Considering that both the kernel size and $\frac{n_1}{n_2}$ have upper bounds in common neural network models, the time complexity of RGO remains the same as that of backpropagation.

\section{Experiment details}
\label{sec:experiment}
\subsection{Baseline implementations}
\label{subsec:baseline}
EWC \citep{Kirkpatrick2017}, LOS \citep{Chaudhry2020}, A-GEM \citep{Chaudhry2019}, and ER-ring \citep{Chaudhry2019b} are implemented from adapting the code provided by \citet{Chaudhry2020} under MIT License. GPM is implemented from the official implementation provided by \citet{Saha2021} under MIT License. 

\subsection{Resources}
\label{sec:resources}
All experiments of our method are completed in several hours with 4 pieces of Nvidia-2080Ti GPUs.

\subsection{Architectures}
We provide details of architectures we used in the experiment section.
\label{apdx:archs}
\begin{itemize}
    \item LeNet-5: A modified LeNet used by \citet{Yoon2019}. There are two convolutional layers with kernels size of (5,5) and channels of (20,50), followed by two hidden fully connected layer with (800,500) units.
    \item AlexNet-6: A modified AlexNet used by \citet{Saha2021}. There are three convolutional layers with kernels size of (4,3,2) and channels of (64,128,256), followed by two hidden fully connected layer with (2048,2048) units.
    \item AlexNet-7: A modified AlexNet. There are four convolutional layers with kernels size of (5,4,3,3) and channels of (64,128,128,128), followed by two hidden fully connected layer with (2048,2048) units.
    \item VGG-11\&VGG-13 : Original VGG11 and VGG13 proposed by \citet{Simonyan2015a}.
    \item ResNet-18: A standard 18-layer ResNet proposed by \citet{He2016}. For our approach, we remove all batch-norm layers because their parameters are not updated by gradient descent.
\end{itemize}
LeNet-like and AlexNet-like architectures are attached a 2$\times$2 maxpooling layer after each convolutional layer.

\subsection{Hyperparameters}
\label{apdx:hyperparameter}

The learning rates of all baselines are generated by hyperparameter search in [0.003,0.01,0.03,0.1,0.3,1] to achieve better results. Other hyperparameters of EWC, A-GEM, ER-Ring and LOS follows \citet{Chaudhry2020}, while those of GPM and APD follows their official implementation.

\begin{itemize}
    \item Single Task Learning\begin{itemize}
        \item learningrate: 0.1(MNIST), 0.03(CIFAR100, miniImageNet)
    \end{itemize}
    \item Recursive Gradient Optimization(Ours)\begin{itemize}
        \item learningrate: 0.1(MNIST), 0.03(CIFAR100, miniImageNet 2000steps), 0.01(miniImageNet 20epochs)
    \end{itemize}
    \item SGD\begin{itemize}
        \item learningrate: 0.1(MNIST), 0.03(CIFAR100, miniImageNet)
    \end{itemize}
    \item EWC\begin{itemize}
        \item learningrate: 0.1(MNIST), 0.03(CIFAR100, miniImageNet)
        \item regularization: 10(MNIST, CIFAR100, miniImageNet)
    \end{itemize}
    \item A-GEM\begin{itemize}
        \item learningrate: 0.1(MNIST), 0.03(CIFAR100, miniImageNet)
    \end{itemize}
    \item ER-Ring\begin{itemize}
        \item learningrate: 0.1(MNIST), 0.03(CIFAR100, miniImageNet)
    \end{itemize}
    \item LOS\begin{itemize}
        \item learningrate: 0.1(MNIST), 0.4(CIFAR100), 0.2(miniImageNet)
    \end{itemize}
    \item GPM\begin{itemize}
        \item learningrate: 0.1(MNIST), 0.03(CIFAR100, miniImageNet)
        \item threshold: 0.95 for first layer and 0.99 for other layers(MNIST) ,increase from 0.97 to 1(CIFAR), increase from 0.985 to 1(miniImageNet)
        \item dimension of representation matrices: 300(MNIST), 125(CIFAR), 100(miniImageNet)
    \end{itemize}

\end{itemize}

\end{document}